\newtheorem{theorem}{Theorem}[section]
\newtheorem{proof}{Proof}[section]
\title{Policy-Based Trajectory Clustering in Offline Reinforcement Learning}
\author{
  Hao Hu\thanks{Work done while Hao Hu was visiting the University of Washington.} ~$^\dagger$\\
  Institute for Interdisciplinary Information\\ Sciences \\
  Tsinghua University \\
  \texttt{h-hu22@mails.tsinghua.edu.cn}
  \And 
  Xinqi Wang\thanks{Equal contribution.} \\
  Paul G. Allen School of Computer Science\\ \& Engineering \\
  University of Washington \\
  Seattle, WA 98195 \\
  \texttt{wxqkaxdd@cs.washington.edu}
  \AND
  Simon S. Du \\
  Paul G. Allen School of Computer Science\\ \& Engineering \\
  University of Washington \\
  Seattle, WA 98195 \\
  \texttt{ssdu@cs.washington.edu}
}
\begin{document}

\maketitle

\begin{abstract}
    We introduce a novel task of clustering trajectories from offline reinforcement learning (RL) datasets, where each cluster center represents the policy that generated its trajectories. By leveraging the connection between the KL-divergence of offline trajectory distributions and a mixture of policy-induced distributions, we formulate a natural clustering objective. To solve this, we propose Policy-Guided K-means (PG-Kmeans) and Centroid-Attracted Autoencoder (CAAE). PG-Kmeans iteratively trains behavior cloning (BC) policies and assigns trajectories based on policy generation probabilities, while CAAE resembles the VQ-VAE framework by guiding the latent representations of trajectories toward the vicinity of specific codebook entries to achieve clustering.
Theoretically, we prove the finite-step convergence of PG-Kmeans and identify a key challenge in offline trajectory clustering: the inherent ambiguity of optimal solutions due to policy-induced conflicts, which can result in multiple equally valid but structurally distinct clusterings.
Experimentally, we validate our methods on the widely used D4RL dataset and custom GridWorld environments. Our results show that both PG-Kmeans and CAAE effectively partition trajectories into meaningful clusters. They offer a promising framework for policy-based trajectory clustering, with broad applications in offline RL and beyond.

\end{abstract}

\section{Introduction}

In recent years, reinforcement learning (RL) has achieved significant progress across a wide range of domains, including robotic control \cite{tang2024deepreinforcementlearningrobotics}, autonomous driving \cite{kiran2021deepreinforcementlearningautonomous}, and recommendation systems \cite{Lin_2024}. However, conventional online RL methods typically rely on continuous interactions with the environment to explore and optimize policies. In many real-world scenarios, such frequent interactions are not only expensive but also pose considerable safety risks—especially in sensitive applications like medical diagnosis and autonomous driving \cite{dulacarnold2019challengesrealworldreinforcementlearning}.

To overcome these limitations, offline reinforcement learning (Offline RL) has emerged as a promising alternative. It aims to learn optimal policies from fixed, pre-collected datasets without further interaction with the environment. The success of this paradigm depends heavily on the quality, coverage, and structure of the offline data \cite{levine20orlsurvey}. As a result, efficiently organizing and utilizing offline datasets has become a core challenge in offline RL research.

Existing offline RL approaches can be broadly categorized into two paradigms based on their use of offline data. The first, Q-learning-based methods, tackle the issue of distributional shift by constraining the learned policy to remain close to the behavior policy \cite{fujimoto2019offpolicydeepreinforcementlearning,kumar2019stabilizingoffpolicyqlearningbootstrapping,wu2019behaviorregularizedofflinereinforcement}, or by leveraging conservative value estimation and uncertainty-aware regularization \cite{kumar20cql,yu2021combo,janner19mbpo,kidambi2021morelmodelbasedoffline}.

The second category, behavior cloning-based methods, directly optimize policies using supervised learning on offline trajectories \cite{chen2020bailbestactionimitationlearning}, or impose constraints on policy updates to avoid distributional shift \cite{brandfonbrener2021offlinerloffpolicyevaluation,kostrikov2022offline}. Some approaches further refine policies through techniques such as importance sampling or trajectory optimization \cite{zhang20gendice,nachum2019algaedicepolicygradientarbitrary,nachum19dice,janner2021offlinereinforcementlearningbig}.

In this work, we propose a novel perspective on leveraging offline data. Offline datasets are often generated by a mixture of distinct policies, resulting in diverse behavioral patterns. Training a single policy on such heterogeneous data can lead to interference among these patterns. Even when the dataset comprises high-quality policies, the presence of varied behavioral styles may increase training complexity, leading to instability and degraded performance. To address this issue, we propose clustering offline RL trajectories to better capture and exploit the underlying structure of behavior policies.
Below, we outline the key motivations for this approach:
\begin{enumerate}
    \item Improving offline RL training and enhancing data utilization.
Offline RL often suffers from distributional shift, especially when learning from suboptimal or heterogeneous datasets. By clustering data according to the underlying behavior policies, we can selectively exploit high-quality trajectories, filter out misleading or low-quality samples, and train policies that generalize more effectively and robustly.
    \item Enabling better understanding and evaluation of RL policies.  
Clustering provides a principled way to analyze and compare different behavior patterns across policies. This structural insight facilitates debugging, enhances interpretability, and guides the development of more effective training strategies.

    \item Facilitating semi-supervised offline RL with minimal reward information.  
    In many real-world scenarios, reward annotations are expensive or infeasible to obtain. A promising direction is to first organize the dataset through unsupervised clustering, and then guide policy learning using a small subset of reward-labeled data. This semi-supervised learning paradigm reduces dependence on dense reward supervision while maintaining strong performance, making it especially appealing in practical applications.

\end{enumerate}


\textbf{Main Contributions.} This paper provides a systematic study of policy-based trajectory clustering. Our key contributions are summarized as follows:

\begin{enumerate}  
    \item We formulate and define the problem of policy-based trajectory clustering, laying the foundation for this new research direction. See Section~\ref{sec:prelims}.

    \item We compare policy-based trajectory clustering with traditional clustering methods from the perspectives of data characteristics, methodology, and solution uniqueness.  
    In particular, we establish a theoretical connection to the \emph{K-coloring problem}, demonstrating a fundamental difference between policy-based trajectory clustering and classical clustering problems in the worst-case scenario.\footnote{Recall that the clustering solution under the $L_2$-distance criterion is unique.} See Section~\ref{sec:Policy Clustering vs Trad clustering}.

    \item Inspired by the classical K-means algorithm and probabilistic mixture models, we propose two new algorithm, Policy-Guided K-means (PG-Kmeans) and Centroid-Attracted Autoencoder(CAAE).  
    These two approaches adopt fundamentally different design philosophies. PG-Kmeans explicitly maintains distinct central policies for each cluster, resulting in clear policy segmentation and enabling direct one-step policy prediction. In contrast, CAAE performs dataset-level clustering using a unified encoder-decoder architecture, which requires fewer computational resources for training and demonstrates more stable performance across experiments.
 
    
    \item We conduct extensive experiments on the D4RL Gym benchmark and newly built environments, comparing our methods with other plausible clustering methods.  
    Experimental results show that PG-Kmeans and CAAE achieves more effective policy-based trajectory clustering under the standard Normalized Mutual Information (NMI) metric. See Section~\ref{sec: experiments}.
\end{enumerate}  

\section{Related Work}

\paragraph{Offline Reinforcement Learning.}
A central challenge in offline reinforcement learning (RL) is mitigating the distributional shift between the learned policy and the behavior policy from which the dataset was collected. Existing methods can be broadly grouped into three main categories. The first class constrains the learned policy to remain close to the behavior policy, either explicitly through regularization \cite{fujimoto2019offpolicydeepreinforcementlearning,kumar2019stabilizingoffpolicyqlearningbootstrapping,wu2019behaviorregularizedofflinereinforcement}, or implicitly via conservative value estimation to prevent overestimation of rewards \cite{kumar20cql,yu2021combo}. The second class focuses on uncertainty estimation, often employing ensemble methods to penalize unreliable actions and enhance robustness in out-of-distribution regions \cite{janner19mbpo,kidambi2021morelmodelbasedoffline}. The third class includes behavior cloning-based approaches, which either treat behavior cloning as a surrogate for policy learning \cite{chen2020bailbestactionimitationlearning}, or constrain policy updates to one-step improvements to avoid error accumulation in off-policy evaluation \cite{brandfonbrener2021offlinerloffpolicyevaluation,kostrikov2022offline}. In parallel, alternative strategies such as importance sampling and trajectory reweighting aim to directly optimize over trajectory distributions without explicitly correcting for distributional shift \cite{zhang20gendice,nachum2019algaedicepolicygradientarbitrary,janner2021offlinereinforcementlearningbig}. Despite their methodological differences, all these approaches critically rely on the coverage, quality, and structure of the offline dataset.

\paragraph{Policy-Based Clustering.}
Policy-based clustering has recently emerged as a promising direction for decomposing complex, heterogeneous offline RL datasets into more interpretable and manageable behavior modes. SORL \cite{mao2024stylized} introduces an expectation-maximization framework that alternates between trajectory clustering and policy optimization, enabling the discovery of diverse and high-quality behaviors. Similarly, \citet{wang2024datasetclusteringimprovedoffline} propose behavior-aware deep clustering to isolate uni-modal behavioral subsets from multi-behavioral datasets, thereby improving policy stability and performance. Other approaches use probabilistic models for implicit clustering. For instance, \citet{li2023offlinereinforcementlearningclosedform} utilize Gaussian Mixture Models (GMMs) to represent latent behavior policies and derive closed-form policy improvement operators. Diffusion-QL \cite{wang2023diffusion} leverages expressive diffusion models to model the multimodal distribution of behavior policies. Collectively, these works highlight the value of policy-level clustering in improving both generalization and robustness in offline RL.

\paragraph{VAEs in Reinforcement Learning.}
Variational Autoencoders (VAEs) \cite{kingma2013vae} are widely used in reinforcement learning for unsupervised representation learning, particularly in high-dimensional or visual domains. They have been employed in model-based RL \cite{ha2018world}, hierarchical skill discovery \cite{pertsch2020spirl}, and curiosity-driven exploration \cite{mohamed2015variational, klissarov2019variational}. A common baseline for trajectory-level modeling is to embed trajectories into latent spaces using VAEs, followed by clustering with standard algorithms like K-means. While simple and computationally efficient, this strategy has key limitations: the learned embeddings may not retain policy-level semantics, and distance-based clustering methods such as K-means are ill-suited for the temporally structured nature of decision-making data (see Table~\ref{tab:NMI}). These limitations motivate the design of our CAAE model, which explicitly targets policy-consistent representation learning with clustering objectives in mind.

\paragraph{Deep Clustering.}
Deep clustering methods aim to jointly learn representations and perform clustering in a unified, often end-to-end, framework. A typical approach involves first learning low-dimensional embeddings using neural networks, then applying clustering algorithms such as K-means or GMMs. DEC \cite{xie2016unsupervised} improves cluster assignments via iterative refinement in an autoencoder architecture, while DEPICT \cite{dizaji2017deep} introduces convolutional backbones for improved image clustering. DAC \cite{chang2017deep} enforces pairwise similarity constraints to enhance representation learning. More recent advancements explore contrastive learning \cite{li2021contrastive}, graph-based methods that exploit structural relationships \cite{bo2020structural}, and fully end-to-end clustering models \cite{ji2019invariant}. These advances provide powerful tools for unsupervised data structuring, many of which can be adapted or extended to sequential decision-making data in RL.

\section{Problem Setup: Policy-Based Trajectory Clustering} 
\label{sec:prelims}

We consider a dataset clustering problem in finite-horizon Markov Decision Process (MDP) offline reinforcement learning (Offline RL), where the objective is to cluster given trajectories based on the policy that generated them.

An MDP is defined as $(\mathcal{S}, \mathcal{A}, P, r, H)$, where $\mathcal{S}$ and $\mathcal{A}$ denote the state and action spaces, respectively. The transition dynamics are given by $P: \mathcal{S} \times \mathcal{A} \to \Delta(\mathcal{S})$, the reward function is $r:\mathcal{S} \times \mathcal{A}\to \mathbb{R}$, and $H$ represents the horizon. A trajectory $\tau$ is defined as $\{s_t, a_t, r_t\}_{t=1}^H$ in the full-reward setting or $\{s_t, a_t\}_{t=1}^H$ in the reward-free setting, where $s_t \in \mathcal{S}$ and $a_t \in \mathcal{A}$. A deterministic policy $\pi: \mathcal{S} \times [H] \to \mathcal{A}$ maps each state-timestep pair to an action.

We assume the dataset consists of a mixture of $k$ sub-datasets, each collected under a different deterministic behavior policy $\{\pi_i\}_{i=1}^k$. Formally,  
\[
\mathcal{D} = \bigcup_{i=1}^k \mathcal{D}_i,
\]
where each $\mathcal{D}_i$ is collected by repeating playing$\pi_i$. The objective is to cluster the trajectories according to their source. A alternative objective, considering it as a K-coloring problem, is provided in Appendix~\ref{app: k-coloring}.

For evaluation, we denote ground-truth labels and predicted labels by $L$ and $C$, respectively, and use Normalized Mutual Information (NMI) \cite{strehl2002cluster} as the performance metric:
\[
\text{NMI}(C, L) = \frac{2 \cdot I(C, L)}{H(C) + H(L)},
\]
where \(I(C, L)\) is the mutual information between \(C\) and \(L\), and \(H(\cdot)\) denotes Shannon entropy.
\section{Policy-Based Trajectory Clustering vs. Traditional Clustering}
\label{sec:Policy Clustering vs Trad clustering}

\paragraph{Nature of Data.}
Traditional deep clustering techniques are typically designed for static data domains such as images, text, or audio. In these settings, individual samples can be embedded into a high-dimensional feature space, and clustering is performed based on similarity in that space. In contrast, offline reinforcement learning datasets comprise sequential state-action pairs generated by latent policies. Here, similarity is not merely a function of geometric proximity in feature space, but also of the underlying generative process, i.e., the decision-making policy.

Due to the stochasticity and high-dimensionality of RL environments, two trajectories that appear similar in state space may in fact arise from fundamentally different policies. As a result, directly applying conventional clustering methods (e.g., K-means or GMM) to raw or embedded state-action sequences often leads to clusters that reflect environmental or task-level structure rather than true policy-level consistency. This misalignment compromises both interpretability and the utility of the clusters for downstream policy learning.

\paragraph{Ambiguity in Policy-Based Trajectory Clustering: A K-Coloring Perspective}
\label{subsec: Ambiguity in Policy Clustering}

Unlike classical clustering tasks, policy-based trajectory clustering may not admit a unique solution. To illustrate this, we draw a reduction to the well-known \textit{K-coloring problem}. Specifically, we construct a graph where each node represents a trajectory, and an edge connects two nodes if the corresponding trajectories exhibit conflicting decision behavior—i.e., they cannot have been generated by the same stationary policy. In this formulation, a valid K-coloring corresponds to a feasible partition of trajectories into policy-consistent clusters.

We further provide an inverse reduction (see Appendix~\ref{app: k-coloring}) and show that the policy-based trajectory clustering problem is NP-complete. This theoretical result highlights a core distinction from standard clustering: the existence of multiple, equally valid clusterings. While this introduces inherent ambiguity, empirical evidence suggests that stable and semantically meaningful solutions can still be obtained under realistic assumptions on the data distribution (see Appendix~\ref{app: policy ambiguity}).


\section{Methods}
\label{sec: methods}

To address the problem of policy-based trajectory clustering, we consider two general approaches.

First, we propose Policy-Guided KMeans (PG-Kmeans), which explicitly maintains $k$ distinct clusters along with their corresponding policy centroids for clustering. This method operates in the space of behaviors by aligning trajectories with representative policies.

Alternatively, in the direction of representation learning, we introduce the Centroid-Attracted Autoencoder (CAAE), which trains an encoder to map trajectories into a low-dimensional latent space. Clustering is then performed in this latent space based on the distance between the embedded representation and a set of learnable centroids.

\subsection{Policy-Guided Kmeans}
\label{sec: PG-Kmeans}

The algorithmic foundation of PG-Kmeans is similar to that of the standard K-means clustering algorithm. In K-means-like algorithms, the clustering objective is typically defined as the sum of distances between data points and their respective cluster centers, which is optimized using the Expectation-Maximization (EM) method.

We derive our clustering objective from an information-theoretic perspective, minimizing the \textit{Kullback-Leibler (KL) divergence} between the empirical data distribution \( P(\tau) \) and the mixture of policy-induced distributions \( \hat{P}(\tau) \), where 
$$
\hat{P}(\tau) = \sum_{j=1}^{K} w_{i,j} \mathbb{P}(\tau | \theta_j).
$$
Here, $\mathbf{W}$ = \( \{w_{i,j}\}_{(i,j)=(1,1)}^{(N,K)} \) represents cluster assignments, \( N \) is the number of data points, and \( K \) is the number of clusters. The KL divergence quantifies the discrepancy between these distributions:

\begin{equation}
D_{\mathrm{KL}}( P(\tau) \| \hat{P}(\tau) ) = \mathbb{E}_{\tau\sim P} \left[\log \frac{P(\tau)}{\hat{P}(\tau)}\right].
\end{equation}

Minimizing this divergence ensures that the learned policies \( \{\pi_j\}_{j=1}^K \) effectively approximate the empirical data distribution. Since $P(\tau)$ is independent of the optimization variables \( \theta_j \) and \( w_{i,j} \), minimizing KL divergence is equivalent to maximizing:

\begin{equation}
\mathbb{E}_{\tau\sim P} \left[\log \hat{P}(\tau)\right] \approx \sum_{i=1}^{N}\log \hat{P}(\tau_i)  = \sum_{i=1}^{N} \sum_{j=1}^{K} w_{i,j} \log \mathbb{P}(\tau_i | \theta_j). 
\end{equation}

Here we denote $\tau_i$ as the sequence $\{(s_{i,h}, a_{i,h})\}_{h=1}^H$. Noting that the assignment weights are binary (\(0\) or \(1\)) and that the environment dynamic is independent of $\theta_j$ and $w_{i,j}$, we arrive at the final objective function:

\begin{align}
    \underset{\mathbf{\theta},\mathbf{W}}{\mathrm{maximize }} \, J(W, \mathbf{\theta}) &= \sum_{i=1}^N \sum_{j=1}^K w_{i,j} \sum_{t=1}^H \log \mathbb{P}(a_{i,h}|\theta_j, s_{i,h}), \label{equ: objective function}\\
    \mathrm{s.t.} \quad w_{i,j} &\in \{0, 1\}, \, \forall (i,j) \in [N] \times [K], \notag\\
    \sum_{j=1}^K w_{i,j} &= 1, \, \forall i \in [N]. \notag
\end{align}

This objective function formulates an optimal distribution matching problem, where we seek to approximate the empirical data distribution using a mixture of policy-induced distributions. The cluster assignments \( w_{i,j} \) ensure that each data point is assigned to the most suitable policy, while the policy parameters \( \theta_j \) are optimized to maximize the likelihood of the assigned trajectories.
\subsubsection{Main Algorithm}

To solve the clustering objective, we derive an iterative algorithm based on the Expectation-Maximization (EM) framework. See Algorithm~\ref{alg:pg_kmeans} for the pseudocode. Similar to K-means, the algorithm alternates between an E-step, where data points are assigned to the most probable generating cluster, and an M-step, where each cluster center is updated to maximize the likelihood of generating its assigned trajectories. This process iterates until convergence, after which cluster centers may be merged to further optimize the objective function \( J(W, \theta) \).

\begin{algorithm}[ht!]
    \caption{Policy-Guided Kmeans (PG-Kmeans)}
    \label{alg:pg_kmeans}
    \begin{algorithmic}[1]
        \STATE \textbf{Input:} Dataset $\mathcal{D} = \{\tau_1, \dots, \tau_N\}$, number of clusters $k$, ground truth $k^*$ (optional), maximum iterations $T$.
        \STATE \textbf{Initialize} cluster assignments $W$ randomly.
        \STATE \textbf{Initialize} $k$ policies $\{\pi_1, \pi_2, \dots, \pi_k\}$ using behavior cloning (BC) trained on the respective clusters.
        \FOR{$t = 1$ to $T$}
            \STATE \textbf{M-step:} Update each policy $\pi_j$ by training a behavior cloning model on the trajectories assigned to cluster $j$.
            \STATE \textbf{E-step:} Assign each trajectory $\tau_i \in \mathcal{D}$ to the cluster \( j = \arg\max_{j'}\mathbb{P}(\tau_i|\theta_{j'}) \), i.e., \( w_i = e_j \), where \( e_j \) is a one-hot vector.
            \STATE \textbf{Check Convergence:} If cluster assignments \( W \) remain unchanged, terminate.
        \ENDFOR
        \STATE If $k^*$ is given, run Algorithm~\ref{alg:merge} to merge clusters.
        \STATE \textbf{Output:} Final cluster assignments \( W \) and policies \( \{\pi_1, \pi_2, \dots, \pi_k\} \).
    \end{algorithmic}
\end{algorithm}

The idea of explicitly maintaining $k$ central policies and clustering trajectories based on the likelihood that each central policy generates a target trajectory has also been explored in SORL~\cite{mao2024stylized}. However, it is important to note that SORL employs a continuous assignment matrix $\mathbf{W}$, making it more akin to an EM-style method than to KMeans. As a result, SORL is more prone to subpopulation homogenization, a phenomenon also observed in our experiments (see Table~\ref{tab:NMI}).

Moreover, due to the use of soft assignments via $\mathbf{W}$, SORL requires training each policy network over the entire dataset. In contrast, PG-Kmeans leverages the classification results from the previous round as a prior approximation, and assigns each data point to a separate network, thereby significantly reducing the overall training cost.

\subsubsection{Theoretical Analysis}
\label{subsec: theoretical analysis}

Theoretically, we establish the convergence of PG-Kmeans following a similar argument as K-means \cite{bottou1995convergence}. Before convergence, the loss function strictly decreases after each iteration, ensuring that no identical grouping pattern occurs during the training process. Since the number of possible grouping patterns for \( N \) data points is finite, PG-Kmeans is guaranteed to converge within a finite number of iterations.

\begin{theorem}[Finite-Step Convergence of PG-Kmeans]
\label{thm: finite convergence}
Given a dataset with \( N \) trajectories and \( k \) clusters, the PG-Kmeans algorithm is guaranteed to converge within a finite number of iterations.
\end{theorem}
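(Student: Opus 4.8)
The plan is to adapt the classical convergence argument for Lloyd's K-means algorithm \cite{bottou1995convergence} to the likelihood objective $J(W,\theta)$ of \eqref{equ: objective function}. Three ingredients are needed: (i) each of the E- and M-steps is a block-coordinate maximization of $J$, so $J$ never decreases along the run; (ii) whenever the E-step actually changes the assignment $W$, $J$ increases \emph{strictly}; and (iii) this strictly increasing quantity can be written as a function of the assignment pattern $W$ alone, of which there are only finitely many, so it increases only finitely often --- forcing the stopping test $W^{(t)}=W^{(t-1)}$ to fire after finitely many iterations.

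For (i), fix iteration $t$ with incoming assignment $W^{(t-1)}$. Since $J(W^{(t-1)},\theta)=\sum_{j=1}^{K}\sum_{i:\,w^{(t-1)}_{i,j}=1}\sum_{h=1}^{H}\log\mathbb{P}(a_{i,h}\mid\theta_j,s_{i,h})$ is separable across clusters and the M-step trains each $\pi_j$ by behavior cloning on cluster $j$'s trajectories --- i.e.\ maximizes the $j$-th block over $\theta_j$ --- we get $J(W^{(t-1)},\theta^{(t)})\ge J(W^{(t-1)},\theta^{(t-1)})$ (assuming the BC step returns the per-cluster maximizer, or at least a policy no worse than the previous one). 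Symmetrically, for fixed $\theta^{(t)}$ the objective is separable across trajectories, and the E-step rule $\tau_i\mapsto\arg\max_{j'}\mathbb{P}(\tau_i\mid\theta^{(t)}_{j'})$ maximizes $J$ over all feasible $W$, so $J(W^{(t)},\theta^{(t)})\ge J(W^{(t-1)},\theta^{(t)})$. Chaining gives $J(W^{(t)},\theta^{(t)})\ge J(W^{(t-1)},\theta^{(t-1)})$.

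For (ii)--(iii), introduce the potential $\Phi(W):=\max_{\theta}J(W,\theta)$, the value of $J$ right after an (idealized) M-step; it depends only on $W$ and hence takes at most $k^{N}$ values (the number of ways to assign $N$ trajectories to $k$ clusters). Fix a tie-breaking rule in the E-step that keeps a trajectory in its current cluster under ties; then $\tau_i$ is reassigned only when its likelihood under the new cluster is \emph{strictly} larger, so $W^{(t)}\ne W^{(t-1)}$ implies $J(W^{(t)},\theta^{(t)})>J(W^{(t-1)},\theta^{(t)})=\Phi(W^{(t-1)})$, and therefore $\Phi(W^{(t)})\ge J(W^{(t)},\theta^{(t)})>\Phi(W^{(t-1)})$. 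Hence $\Phi(W^{(0)}),\Phi(W^{(1)}),\dots$ strictly increases for as long as the assignment changes; confined to a finite set, it can do so at most $k^{N}$ times, after which $W^{(t)}=W^{(t-1)}$ and Algorithm~\ref{alg:pg_kmeans} terminates --- which is exactly finite-step convergence. In particular no grouping pattern recurs, matching the informal sketch; the optional cluster-merging step is a one-shot finite post-processing that does not affect the count.

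The main obstacle is justifying the idealized M-step: the argument needs $\Phi(W)=\max_\theta J(W,\theta)$ to be genuinely attained so that the strictly increasing quantity is a bona fide function of $W$, whereas a real BC step only approximately maximizes via SGD. I would resolve this either by (a) assuming exact per-cluster likelihood maximization --- natural in the tabular / deterministic-policy regime that underlies the K-coloring connection of Section~\ref{subsec: Ambiguity in Policy Clustering} --- or by (b) replacing $J$ with the integer surrogate ``number of state--action pairs $(s_{i,h},a_{i,h})$ predicted correctly by the assigned cluster's policy,'' which is bounded above by $NH$, is maximized clusterwise by BC, and strictly increases on every genuine E-step reassignment, yielding the same finiteness conclusion. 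A related technical wrinkle is that for strictly deterministic policies $\log\mathbb{P}(a\mid s,\theta)\in\{0,-\infty\}$, so $J\equiv-\infty$ off the conflict-free configurations; this is harmless --- the set of values actually visited ($-\infty$ together with finitely many reals) is still finite, or one simply works with the correct-prediction count --- but it is the point where the proof must be stated with care.
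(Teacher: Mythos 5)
Your proof is correct and follows essentially the same route as the paper's: both arguments track the potential $\Phi(W)=\max_{\theta}J(W,\theta)$, show it strictly increases across any iteration in which the assignment changes, and bound the number of distinct assignments by $k^{N}$ to conclude finite termination. Your extra care about the E-step tie-breaking rule and the exactness of the per-cluster BC maximization addresses precisely the points the paper's proof leaves implicit (it asserts the strict inequality $J(W^{t-1},\theta^{t})<\max_{W}J(W,\theta^{t})$ without justifying why $W^{t-1}$ cannot already be a maximizer), so your version is, if anything, the more rigorous one.
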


Rigorous proof deferred to Appendix ~\ref{app: main proof}. We note that in the worst case, the number of iterations required can be as high as \( O\left(K^N\right) \). However, in our experimental setup, the algorithm consistently converged within 20 iterations.

\subsubsection{Implementation details of PG-Kmeans}
In practice, the vanilla version of PG-Kmeans exhibited considerable instability during training. To address this, we adopted a best-of-$N$ selection strategy based on training loss to improve robustness. 

In addition, we employed an overparameterization-and-merging approach to handle scenarios where the true number of clusters $k$ is unknown. Empirically, we found that initializing PG-Kmeans with a slightly overestimated number of hypothetical cluster centers leads to substantial performance improvements (see Section~\ref{subsec:results and discussion}). For further details, please refer to Appendix~\ref{app: Details of PG-Kmeans}.

\subsection{Centroid-Attracted Autoencoder (CAAE)}
\label{subsec: CAAE}





\begin{figure}[ht]
    \centering
    \resizebox{0.9\textwidth}{!}{\begin{tikzpicture}[
  varnode/.style={draw=green!70!black, line width=1.2pt, rounded corners=3pt,
                   minimum width=2.0cm, minimum height=1cm,
                   align=center, fill=green!20, font=\Large},
  rednode/.style={draw=red!70!black, line width=1.2pt, rounded corners=3pt,
                   minimum width=2.0cm, minimum height=1cm,
                   align=center, fill=red!20, font=\Large},
  modelnode/.style={draw=yellow!70!black, line width=1.2pt, rectangle,
                     minimum width=3.5cm, minimum height=1.4cm,
                     align=center, fill=yellow!30, font=\large},
  codebook/.style={draw=gray!70!black, line width=1.2pt, trapezium,
                     trapezium left angle=60, trapezium right angle=120,
                     fill=gray!10, font=\normalsize},
  lossline/.style={draw, dashed, <->},
  >={Stealth}
]

\node[varnode] (seq1) {$o_1, a_1$};
\node[varnode] (seq2) [below=0.3cm of seq1] {$o_2, a_2$};
\node[varnode] (seq3) [below=0.3cm of seq2] {$o_3, a_3$};
\node[varnode] (seqdots) [below=0.3cm of seq3] {$\vdots$};
\node[varnode] (seqH) [below=0.3cm of seqdots] {$o_H, a_H$};

\node[modelnode] (encoder) [right=3cm of seq3] {Encoder\\$E(\{(o_i,a_i)\}_{i=1}^H)$};

\node[rednode] (z) [right=of encoder, xshift=1cm] {Latent $z$};

\node[codebook] (codebook) [below=of z, yshift=-0.5cm] {\shortstack{Codebook \\$\{\mu_i\}_{i=1}^k \sim \mathcal{N}(0,I)$}};

\node[modelnode] (decoder) [right=of z] {Decoder\\$D(z, o_i)$};

\node[rednode] (output) [right=of decoder] {$\hat{a}_i$};

\node[varnode] (trueai) [below=1.2cm of output] {$a_i$};

\draw[->] (seq1.east) -- ++(0.7,0) |- (encoder.west);
\draw[->] (seq2.east) -- ++(0.7,0) |- (encoder.west);
\draw[->] (seq3.east) -- ++(0.7,0) |- (encoder.west);
\draw[->] (seqdots.east) -- ++(0.7,0) |- (encoder.west);
\draw[->] (seqH.east) -- ++(0.7,0) |- (encoder.west);

\draw[->] (encoder) -- (z);

\draw[lossline] (codebook.north) -- (z.south) node[midway,left, font=\Large] {$L_{\mathrm{reg}}$};

\draw[->] (z.east) -- (decoder.west);

\node[varnode, minimum height=1cm] (oi) at ($(decoder.north)+(0,0.8cm)$) {$o_i$};
\draw[->] (oi) -- (decoder);

\draw[->] (decoder.east) -- (output.west);

\draw[lossline] (output.south) -- (trueai.north) node[midway,right, font=\Large] {$L_{\mathrm{recon}}$};

\end{tikzpicture}}
    \caption{A schematic of our CAAE. The full trajectory $\{(o_i, a_i)\}_{i=1}^H$ is first encoded into a compact latent vector $z$. A learnable Gaussian codebook $\{\mu_i\}$ imposes a regulation loss to keep $z$ near the prior, while the decoder conditions on both $z$ and a single observation $o_i$ to reconstruct the action $\hat a_i$. 
}
    \label{fig: CAAE constructure.}
\end{figure}
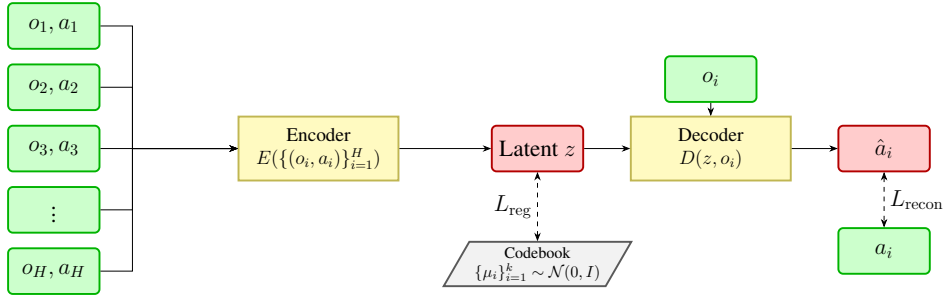
In addition to the explicit trajectory clustering approach employed by PG-Kmeans, we introduce an alternative representation learning-based method, the Centroid-Attracted Autoencoder (CAAE). In CAAE, each input trajectory $\tau$ is first encoded into a latent representation $z = \text{encoder}(\tau)$, which is then combined with the observation sequence and passed through a decoder to reconstruct the trajectory $\tilde{\tau}\sim \text{decoder}(z,o_{1:H})$. 

To regularize the latent space, we impose a constraint motivated by the assumption that each latent variable follows a Gaussian distribution, i.e., $z_i \sim \mathcal{N}(\mu_i, I)$, where $\mu_i$ is selected from a learnable codebook $\{\mu_j\}_{j=1}^k$. This assumption leads to a penalty term that encourages each $z_i$ to be close to one of the centroids in the codebook, thus promoting structured and interpretable embeddings. 

Formally, the CAAE objective is defined as:
\begin{equation}
L(\phi, \theta, \mu) = \sum_{i=1}^N \left( -\sum_{h=1}^H\log \mathbb{P}(a_{i,h} \mid \theta, z_i, s_{i,h}) + \alpha \min_j \|\mu_j - z_i\|^2 \right) - \dfrac{1}{m^2}\sum_{i,j} \min\{1,||\mu_i-\mu_j||_2^2\},
\end{equation}
where $z_i = \text{encoder}(\tau_i, \phi)$, $\theta$ and $\phi$ represents the parameters of encoder and decoder respectively, and the last term is a regularization term. Implementation details can be found in Appendix \ref{app: ae_imp}





\subsection{Comparison Between PG-Kmeans and CAAE}

As one of the first systematic investigations into policy-based trajectory clustering in reinforcement learning, our work introduces and contrasts two complementary approaches—PG-Kmeans and CAAE—each reflecting a distinct design philosophy and offering unique advantages. The differences between these methods can be primarily characterized along two key dimensions:

First, PG-Kmeans assigns each cluster a separate policy network, whereas CAAE employs a shared decoder modulated by latent inputs from a unified encoder. This architectural distinction leads PG-Kmeans to produce sharper, more distinct cluster boundaries, enabling precise policy segmentation. However, this comes at the cost of higher computational complexity and increased sensitivity during training. In contrast, CAAE benefits from parameter sharing, resulting in greater computational efficiency and empirically more stable optimization.

Second, PG-Kmeans performs clustering based on single-step action likelihoods, making it effective at capturing fine-grained distinctions within mixed-intent or concatenated trajectories. CAAE, by comparison, encodes entire trajectories and performs clustering in the latent space of these holistic embeddings. This makes it more adept at modeling global behavioral structure, but less responsive to localized transitions or mode-switching within a single trajectory.

By proposing both PG-Kmeans and CAAE, we offer a dual-perspective framework for policy-based trajectory clustering—one that balances precision and generalization, and accommodates a diverse set of practical scenarios and modeling requirements in offline RL.

\section{Experiments}
\label{sec: experiments}

We evaluate PG-Kmeans and CAAE in both continuous and discrete action spaces and compare it against several traditional clustering methods.

\subsection{Environments and baselines}
\label{subsec: Environments}

\paragraph{Gridworld.}  
We designed three discrete environments and a continous environment with corresponding policies: Takeball, Diagonal and Extra as discrete environment and Pathfollowing as continous environment (Figure ~\ref{fig:enter-label}). In the Takeball environment, the agent selects one of four different balls on the map before navigating to the bottom-right corner as the destination. In the Diagonal environment, the agent is randomly initialized in the top-left corner and must reach the bottom-right corner. In the Extra environment, the agent need to move from a start grid to a terminal grid in a randomly generated map, with two specially marked grids. In Pathfollowing, the agent need to control a ball to be close to a specific point. For each environment, we designed 3-5 different strategies and collected a balanced dataset with them.

To introduce stochasticity, we applied a 0.3 probability of random dynamics at each step in discrete environments, and added Gaussian noise to each move in Pathfollowing. The initial position was sampled uniformly from $[-1.5, -0.5]^2$ in Pathfollowing. The reward function is trivially set to a constant 0. See Appendix \ref{app: dataset_settings} for details.


\paragraph{Gym Environments.}  
For continuous tasks, we also used the widely adopted D4RL dataset \cite{d4rl}. Specifically, we selected the medium-expert datasets from four Gym environments, as they align well with the definition of datasets composed of multiple deterministic policies.
\begin{figure}
    \centering
    \includegraphics[width=0.24\linewidth]{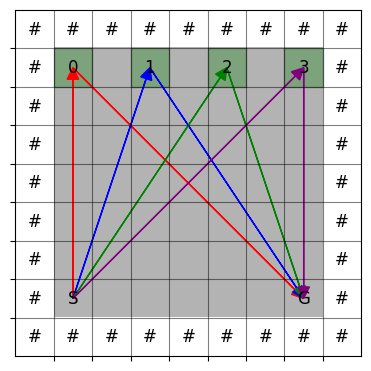}
    \includegraphics[width=0.24\linewidth]{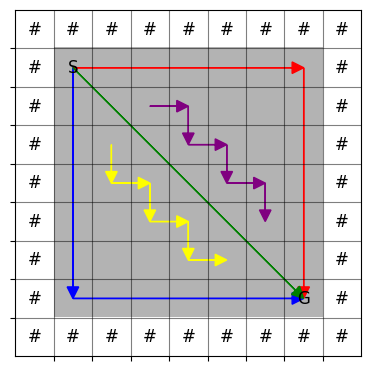}
    \includegraphics[width=0.24\linewidth]{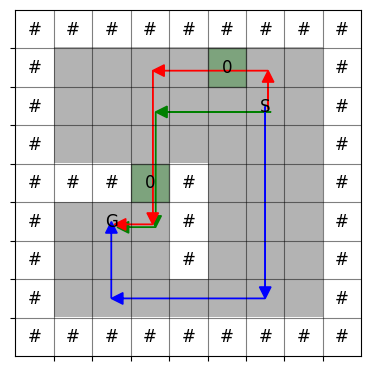}
    \includegraphics[width=0.24\linewidth]{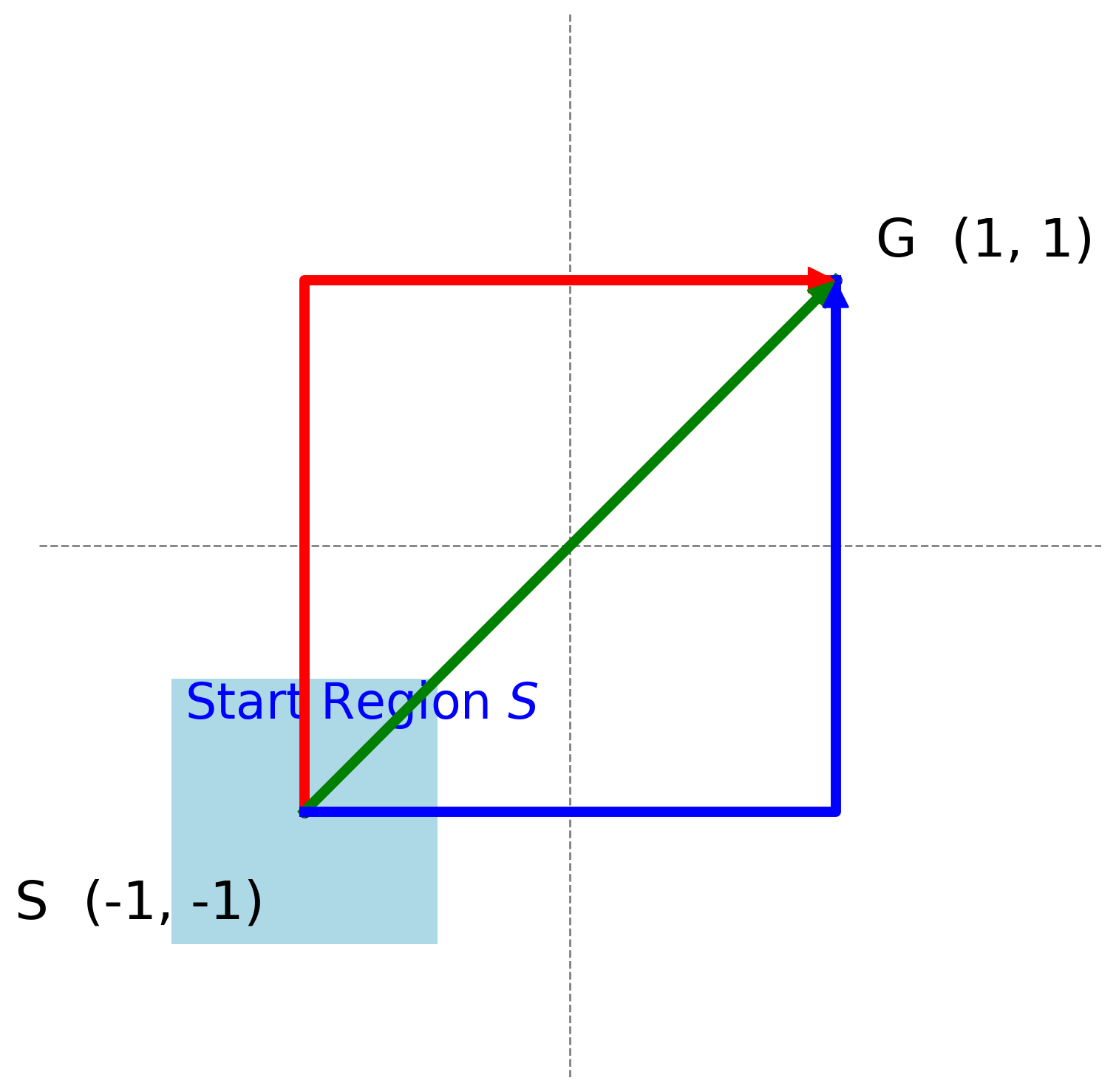}
    \caption{\textbf{Takeball (left 1), Diagonal (left 2), Extra (left 3), Pathfollowing (left 4).} In Takeball, the four ground-truth policies correspond to tendencies to collect each of the four balls, respectively. In Diagonal, Expert 1 (red) prefers moving right first; Expert 2 (blue) prefers moving down first; Expert 3 (green) follows the diagonal; Experts 4 (purple) and 5 (yellow) follow zigzag paths. In Extra, Expert 1 (red) always visits the two special grids; Expert 2 (blue) never visits them; Expert 3 (green) ignores the special grids. In Pathfollowing, three experts try to move corresponding routes first and then follows the route to the terminal point, the special grids are labeled by 0.}
    \label{fig:enter-label}
\end{figure}

\begin{table*}[!t]
    \centering
    \caption{Normalized Mutual Information (NMI) scores for all methods on D4RL and Gridworld environments, The values in the table are presented as mean $\pm$ std. VAE and Return are represent algorithm VAE+Kmeans and Return+Kmeans. Each item are including over at least 10 random seeds. Our algorithms, PG-Kmeans and CAAE, demonstrate consistently high clustering accuracy across the majority of datasets. In contrast, baseline methods such as DEC, SORL, and Return+Kmeans perform well only on a limited subset of datasets.
    }
    \begin{tabular}{|l|c|c|c|c|c|c|c|}
        \hline
        Task        & PG-Kmeans                        &CAAE                      & VAE                       & Return                   & DEC                                        & SORL      \\ \hline
        Halfcheetah & \textbf{0.99} \scriptsize{$\pm$ 0.00}                &\textbf{0.99} \scriptsize{$\pm$ 0.00} & 0.00 \scriptsize{$\pm$ 0.00}   & 0.97 \scriptsize{$\pm$ 0.00}  & 0.95 \scriptsize{$\pm$ 0.01}        &  0.12 \scriptsize{$\pm$ 0.33}         \\ 
        Ant         & 0.92 \scriptsize{$\pm$ 0.00}                &\textbf{0.96} \scriptsize{$\pm$ 0.01} & 0.01 \scriptsize{$\pm$ 0.01}   & 0.05 \scriptsize{$\pm$ 0.00}  & 0.39 \scriptsize{$\pm$ 0.17}        &  0.00 \scriptsize{$\pm$ 0.00}           \\
        Walker2d    & \textbf{0.94} \scriptsize{$\pm$ 0.01}                & 0.88 \scriptsize{$\pm$ 0.10} & 0.00 \scriptsize{$\pm$ 0.00}   & 0.23 \scriptsize{$\pm$ 0.00}  & 0.77 \scriptsize{$\pm$ 0.12}        &  0.08 \scriptsize{$\pm$ 0.24}     \\ 
        Hopper      & \textbf{0.99} \scriptsize{$\pm$ 0.00}                &\textbf{0.99} \scriptsize{$\pm$ 0.01} & 0.00 \scriptsize{$\pm$ 0.00}   & 0.86 \scriptsize{$\pm$ 0.00}  & 0.00 \scriptsize{$\pm$ 0.00}        &  0.84 \scriptsize{$\pm$ 0.26}        \\ \hline
        Diagonal    & \textbf{0.92} \scriptsize{$\pm$ 0.00}                &0.87 \scriptsize{$\pm$ 0.05} & 0.10 \scriptsize{$\pm$ 0.02}   & N/A                             & 0.28 \scriptsize{$\pm$ 0.02}        &  0.18 \scriptsize{$\pm$ 0.05}         \\ 
        Takeball    & \textbf{1.00} \scriptsize{$\pm$ 0.00}                & \textbf{1.00} \scriptsize{$\pm$ 0.00} & 0.00 \scriptsize{$\pm$ 0.00}   & N/A                             & 0.05 \scriptsize{$\pm$ 0.10}        & 0.54 \scriptsize{$\pm$ 0.13}         \\ 
        Pathfollowing &0.12 \scriptsize{$\pm$ 0.11}                   &0.15 \scriptsize{$\pm$ 0.02}  &0.03 \scriptsize{$\pm$ 0.05}    & N/A                             & \textbf{0.26} \scriptsize{$\pm$ 0.10}        & 0.14 \scriptsize{$\pm$ 0.05} \\
        Extra & 0.02 \scriptsize{$\pm$ 0.01}                   &\textbf{0.43} \scriptsize{$\pm$ 0.22}  & 0.04 \scriptsize{$\pm$ 0.05}    & N/A                             &  0.28 \scriptsize{$\pm$ 0.36}        & 0.00 \scriptsize{$\pm$ 0.00} \\
        
        \hline
    \end{tabular}
    \label{tab:NMI}
\end{table*}

\paragraph{Baseline Methods}
To the best of our knowledge, no algorithm has been developed specifically for policy‑based clustering prior to this work. Therefore, in our experiments we compare against a selection of deep clustering methods and functionally analogous approaches. Specifically, we use Deep Embedded Clustering\cite{xie2016unsupervised}, SORL\cite{mao2024stylized}, Return + Kmeans, and VAE + Kmeans as our baselines.

\subsection{Results and Discussion}

\label{subsec:results and discussion}

\begin{figure}
    \centering
    \includegraphics[width=0.495\linewidth]{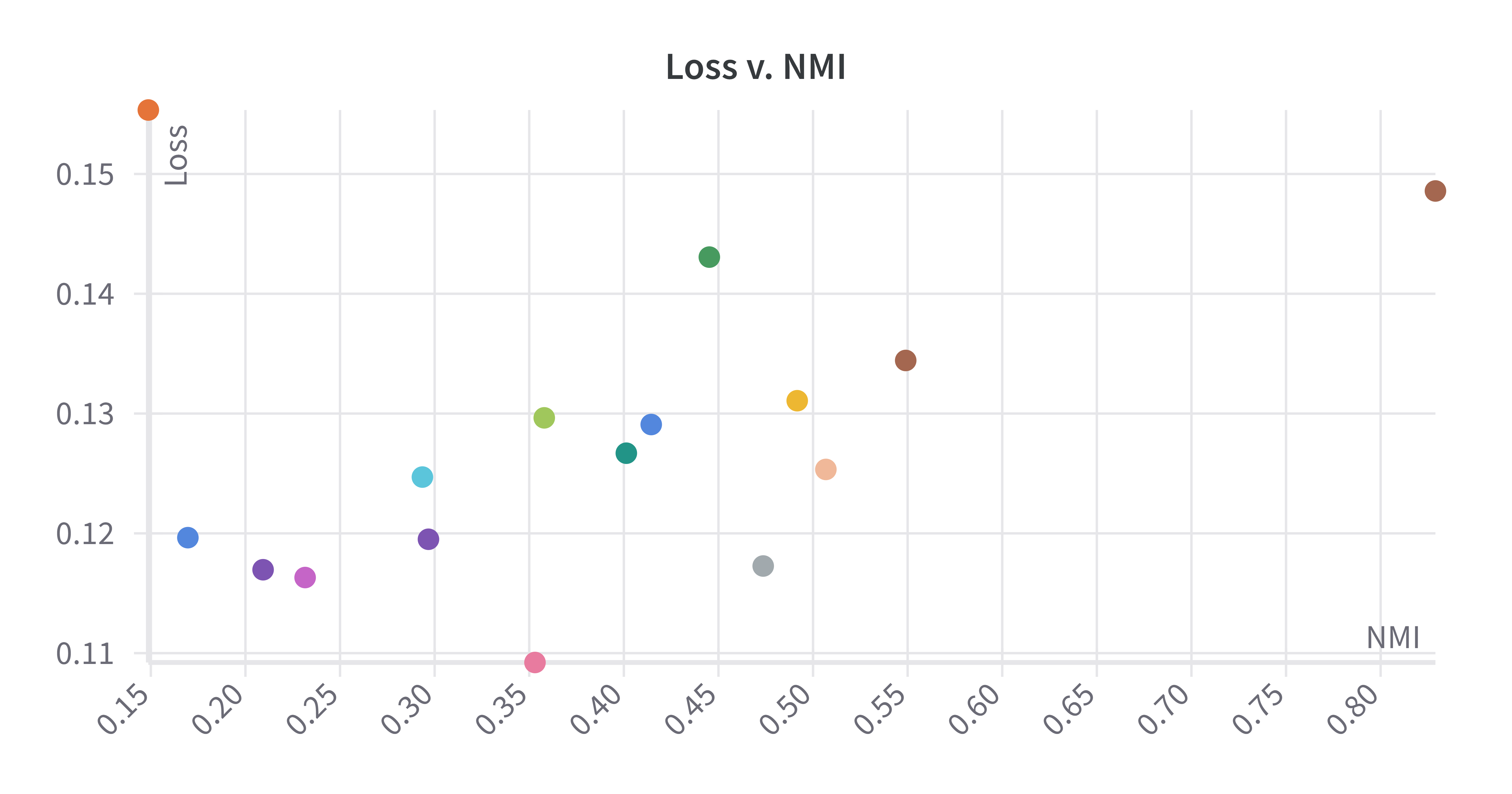}
    \includegraphics[width=0.495\linewidth]{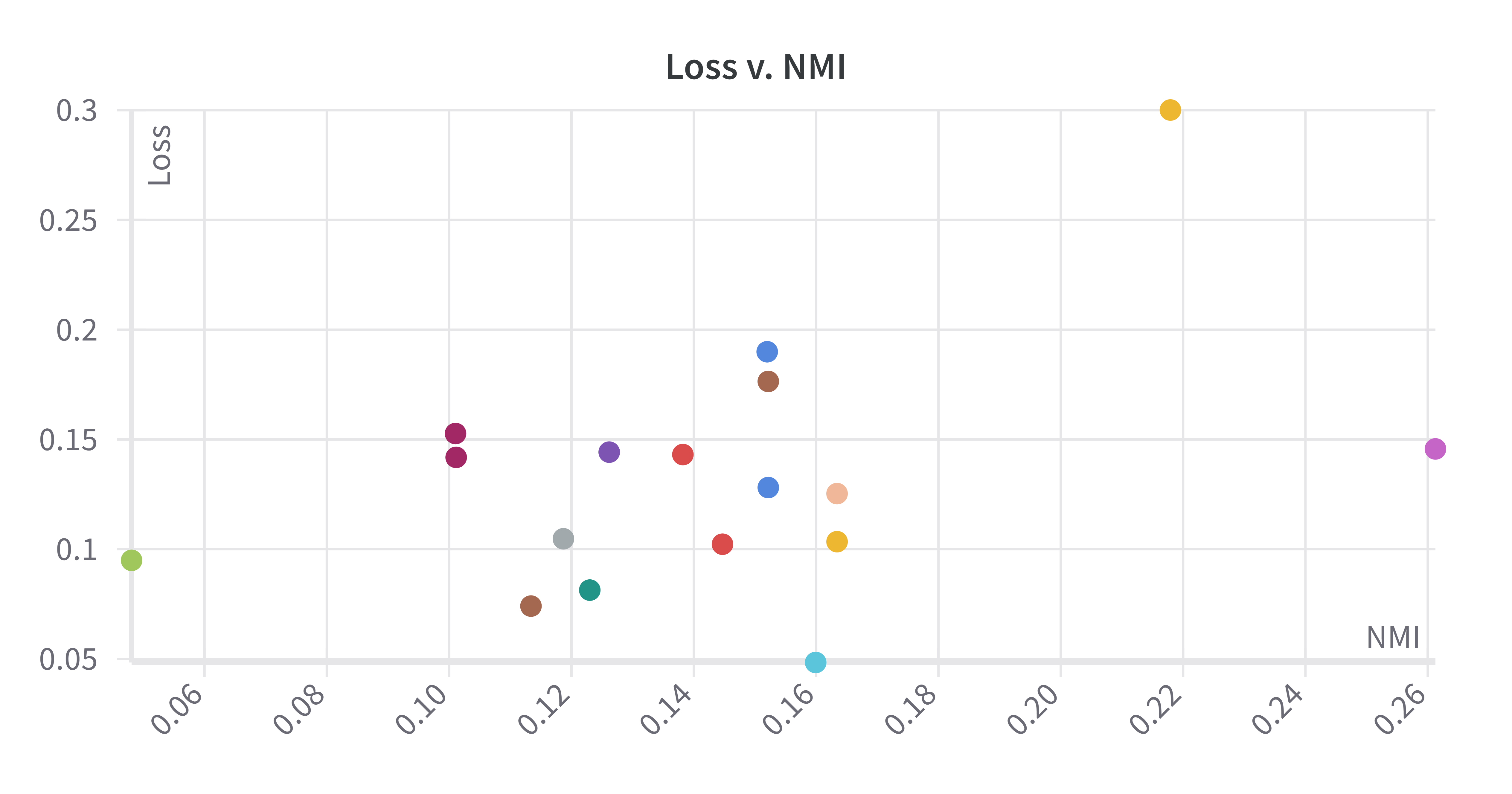}
    \caption{NMI vs. loss in Extra and Pathfollowing with CAAE. The figure shows no significant correlation between NMI and loss, indicating overfitting.}
    \label{fig:caae_extra_loss_v_nmi}
\end{figure}

As shown in Table \ref{tab:NMI}, both PG-Kmeans and CAAE achieve high NMI scores across most of the datasets. Comparison between the action-imitation algorithms PG-Kmeans and SORL reveals that PG-Kmeans achieves better performance than SORL in nearly all environmental configurations. Conventional representation learning-based algorithms, such as DEC and VAE combined with K-means, fail to achieve optimal performance on most datasets. In contrast, CAAE demonstrates significantly superior stability and classification accuracy. 

Meanwhile, we observe that in certain environments, even the best-performing algorithms fail to achieve an NMI close to 1.0. While suboptimal optimization contributes significantly to this gap, the inherent indistinguishability within the dataset also imposes an upper bound below 1.0. For instance, in the Diagonal environment, the behaviors of Expert 3 and Expert 4 are almost identical near the diagonal, making them difficult to separate. Similarly, in Extra, Expert 2 and Expert 3 may follow nearly identical trajectories on certain maps.

Moreover, a major challenge in optimization is overfitting (Figure ~\ref{fig:caae_extra_loss_v_nmi}). When the conflict rate of (s, a) pairs between two policies is low, merging them into a single policy incurs minimal loss increase. For example, in the Pathfollowing environment, different policies only conflict in the start region. This allows a single policy network to model all policies with relatively low loss, even if it does not accurately distinguish them.

In addition to the main results, we conducted an ablation study on topics such as the choice of the cluster number k in both CAAE and PG-Kmeans, as well as the necessity of regularization. Due to space limitations, detailed settings and results are provided in Appendix~\ref{app: ablation study}. The conclusions highlight the effectiveness of regularization and demonstrate that mild overparameterization can lead to performance improvements.

\section{Summary and Limitations}
In this paper, we formalized the problem of policy-based trajectory clustering in offline reinforcement learning and proved that it is NP-complete. We further analyzed how it differs from conventional clustering tasks. To address this challenge, we introduced two methods—PG-Kmeans and CAAE—and evaluated them on carefully curated datasets. Experimental results demonstrate their clear advantages over existing baselines.

A key limitation of this study is the relatively small scale of the experiments. The proposed methods have not yet been tested on large-scale datasets or in complex, real-world environments. Moreover, the algorithms currently lack theoretical convergence guarantees, and the uniqueness of clustering solutions remains an open question. Future work may involve a more rigorous problem formulation, improved strategies to mitigate overfitting, and a comprehensive theoretical analysis of the proposed approaches.

\begin{ack}
SSD acknowledges the support of NSF DMS 2134106, NSF CCF 2212261, NSF IIS 2143493, NSF IIS 2229881, Alfred P. Sloan Research Fellowship, and Schmidt Sciences AI 2050 Fellowship.
\end{ack}

\bibliography{neurips_2025}

\begin{thebibliography}{38}
\providecommand{\natexlab}[1]{#1}
\providecommand{\url}[1]{\texttt{#1}}
\expandafter\ifx\csname urlstyle\endcsname\relax
  \providecommand{\doi}[1]{doi: #1}\else
  \providecommand{\doi}{doi: \begingroup \urlstyle{rm}\Url}\fi

\bibitem[Bo et~al.(2020)Bo, Wang, Shi, Zhu, Lu, and Cui]{bo2020structural}
Deyu Bo, Xiao Wang, Chuan Shi, Meiqi Zhu, Emiao Lu, and Peng Cui.
\newblock Structural deep clustering network.
\newblock In \emph{Proceedings of The Web Conference 2020}, WWW ’20, page
  1400–1410. ACM, April 2020.
\newblock \doi{10.1145/3366423.3380214}.
\newblock URL \url{http://dx.doi.org/10.1145/3366423.3380214}.

\bibitem[Bottou and Bengio(1995)]{bottou1995convergence}
L{\'e}on Bottou and Yoshua Bengio.
\newblock Convergence properties of the k-means algorithms.
\newblock In \emph{Advances in Neural Information Processing Systems},
  volume~7, pages 585--592. MIT Press, 1995.

\bibitem[Brandfonbrener et~al.(2021)Brandfonbrener, Whitney, Ranganath, and
  Bruna]{brandfonbrener2021offlinerloffpolicyevaluation}
David Brandfonbrener, William~F. Whitney, Rajesh Ranganath, and Joan Bruna.
\newblock Offline rl without off-policy evaluation, 2021.
\newblock URL \url{https://arxiv.org/abs/2106.08909}.

\bibitem[Chang et~al.(2017)Chang, Wang, Meng, Xiang, and Pan]{chang2017deep}
Jianlong Chang, Lingfeng Wang, Gaofeng Meng, Shiming Xiang, and Chunhong Pan.
\newblock { Deep Adaptive Image Clustering }.
\newblock In \emph{2017 IEEE International Conference on Computer Vision
  (ICCV)}, pages 5880--5888, Los Alamitos, CA, USA, October 2017. IEEE Computer
  Society.
\newblock \doi{10.1109/ICCV.2017.626}.
\newblock URL \url{https://doi.ieeecomputersociety.org/10.1109/ICCV.2017.626}.

\bibitem[Chen et~al.(2020)Chen, Zhou, Wang, Wang, Wu, and
  Ross]{chen2020bailbestactionimitationlearning}
Xinyue Chen, Zijian Zhou, Zheng Wang, Che Wang, Yanqiu Wu, and Keith Ross.
\newblock Bail: Best-action imitation learning for batch deep reinforcement
  learning, 2020.
\newblock URL \url{https://arxiv.org/abs/1910.12179}.

\bibitem[Dizaji et~al.(2017)Dizaji, Herandi, Deng, Cai, and
  Huang]{dizaji2017deep}
Kamran~Ghasedi Dizaji, Amirhossein Herandi, Cheng Deng, Weidong Cai, and Heng
  Huang.
\newblock Deep clustering via joint convolutional autoencoder embedding and
  relative entropy minimization, 2017.
\newblock URL \url{https://arxiv.org/abs/1704.06327}.

\bibitem[Dulac-Arnold et~al.(2019)Dulac-Arnold, Mankowitz, and
  Hester]{dulacarnold2019challengesrealworldreinforcementlearning}
Gabriel Dulac-Arnold, Daniel Mankowitz, and Todd Hester.
\newblock Challenges of real-world reinforcement learning, 2019.
\newblock URL \url{https://arxiv.org/abs/1904.12901}.

\bibitem[Fu et~al.(2020)Fu, Kumar, Nachum, Tucker, and Levine]{d4rl}
Justin Fu, Aviral Kumar, Ofir Nachum, George Tucker, and Sergey Levine.
\newblock {D4RL:} datasets for deep data-driven reinforcement learning.
\newblock \emph{CoRR}, abs/2004.07219, 2020.
\newblock URL \url{https://arxiv.org/abs/2004.07219}.

\bibitem[Fujimoto et~al.(2019)Fujimoto, Meger, and
  Precup]{fujimoto2019offpolicydeepreinforcementlearning}
Scott Fujimoto, David Meger, and Doina Precup.
\newblock Off-policy deep reinforcement learning without exploration, 2019.
\newblock URL \url{https://arxiv.org/abs/1812.02900}.

\bibitem[Ha and Schmidhuber(2018)]{ha2018world}
David Ha and Jürgen Schmidhuber.
\newblock World models.
\newblock 2018.
\newblock \doi{10.5281/ZENODO.1207631}.
\newblock URL \url{https://zenodo.org/record/1207631}.

\bibitem[Janner et~al.(2019)Janner, Fu, Zhang, and Levine]{janner19mbpo}
Michael Janner, Justin Fu, Marvin Zhang, and Sergey Levine.
\newblock When to trust your model: Model-based policy optimization.
\newblock In Hanna~M. Wallach, Hugo Larochelle, Alina Beygelzimer, Florence
  d'Alch{\'{e}}{-}Buc, Emily~B. Fox, and Roman Garnett, editors, \emph{Advances
  in Neural Information Processing Systems 32: Annual Conference on Neural
  Information Processing Systems 2019, NeurIPS 2019, December 8-14, 2019,
  Vancouver, BC, Canada}, pages 12498--12509, 2019.
\newblock URL
  \url{https://proceedings.neurips.cc/paper/2019/hash/5faf461eff3099671ad63c6f3f094f7f-Abstract.html}.

\bibitem[Janner et~al.(2021)Janner, Li, and
  Levine]{janner2021offlinereinforcementlearningbig}
Michael Janner, Qiyang Li, and Sergey Levine.
\newblock Offline reinforcement learning as one big sequence modeling problem,
  2021.
\newblock URL \url{https://arxiv.org/abs/2106.02039}.

\bibitem[Ji et~al.(2019)Ji, Henriques, and Vedaldi]{ji2019invariant}
Xu~Ji, João~F. Henriques, and Andrea Vedaldi.
\newblock Invariant information clustering for unsupervised image
  classification and segmentation, 2019.
\newblock URL \url{https://arxiv.org/abs/1807.06653}.

\bibitem[Kidambi et~al.(2021)Kidambi, Rajeswaran, Netrapalli, and
  Joachims]{kidambi2021morelmodelbasedoffline}
Rahul Kidambi, Aravind Rajeswaran, Praneeth Netrapalli, and Thorsten Joachims.
\newblock Morel : Model-based offline reinforcement learning, 2021.
\newblock URL \url{https://arxiv.org/abs/2005.05951}.

\bibitem[Kingma and Welling(2013)]{kingma2013vae}
Diederik~P. Kingma and Max Welling.
\newblock Auto-encoding variational bayes.
\newblock \emph{arXiv preprint arXiv:1312.6114}, 2013.

\bibitem[Kiran et~al.(2021)Kiran, Sobh, Talpaert, Mannion, Sallab, Yogamani,
  and Pérez]{kiran2021deepreinforcementlearningautonomous}
B~Ravi Kiran, Ibrahim Sobh, Victor Talpaert, Patrick Mannion, Ahmad A.~Al
  Sallab, Senthil Yogamani, and Patrick Pérez.
\newblock Deep reinforcement learning for autonomous driving: A survey, 2021.
\newblock URL \url{https://arxiv.org/abs/2002.00444}.

\bibitem[Klissarov et~al.(2019)Klissarov, Bacon, Pineau, and
  Precup]{klissarov2019variational}
Martin Klissarov, Pierre-Luc Bacon, Joelle Pineau, and Doina Precup.
\newblock Variational state encoding as intrinsic motivation in reinforcement
  learning.
\newblock In \emph{Task-Agnostic Reinforcement Learning Workshop at Proceedings
  of the International Conference on Learning Representations (ICLR)},
  volume~15, 2019.

\bibitem[Kostrikov et~al.(2022)Kostrikov, Nair, and
  Levine]{kostrikov2022offline}
Ilya Kostrikov, Ashvin Nair, and Sergey Levine.
\newblock Offline reinforcement learning with implicit q-learning.
\newblock In \emph{International Conference on Learning Representations}, 2022.
\newblock URL \url{https://openreview.net/forum?id=68n2s9ZJWF8}.

\bibitem[Kumar et~al.(2019)Kumar, Fu, Tucker, and
  Levine]{kumar2019stabilizingoffpolicyqlearningbootstrapping}
Aviral Kumar, Justin Fu, George Tucker, and Sergey Levine.
\newblock Stabilizing off-policy q-learning via bootstrapping error reduction,
  2019.
\newblock URL \url{https://arxiv.org/abs/1906.00949}.

\bibitem[Kumar et~al.(2020)Kumar, Zhou, Tucker, and Levine]{kumar20cql}
Aviral Kumar, Aurick Zhou, George Tucker, and Sergey Levine.
\newblock Conservative q-learning for offline reinforcement learning.
\newblock \emph{CoRR}, abs/2006.04779, 2020.
\newblock URL \url{https://arxiv.org/abs/2006.04779}.

\bibitem[Levine et~al.(2020)Levine, Kumar, Tucker, and Fu]{levine20orlsurvey}
Sergey Levine, Aviral Kumar, George Tucker, and Justin Fu.
\newblock Offline reinforcement learning: Tutorial, review, and perspectives on
  open problems.
\newblock \emph{CoRR}, abs/2005.01643, 2020.
\newblock URL \url{https://arxiv.org/abs/2005.01643}.

\bibitem[Li et~al.(2023)Li, Zhang, Yin, Bai, Wang, and
  Wang]{li2023offlinereinforcementlearningclosedform}
Jiachen Li, Edwin Zhang, Ming Yin, Qinxun Bai, Yu-Xiang Wang, and William~Yang
  Wang.
\newblock Offline reinforcement learning with closed-form policy improvement
  operators, 2023.
\newblock URL \url{https://arxiv.org/abs/2211.15956}.

\bibitem[Li et~al.(2020)Li, Hu, Liu, Peng, Zhou, and Peng]{li2021contrastive}
Yunfan Li, Peng Hu, Zitao Liu, Dezhong Peng, Joey~Tianyi Zhou, and Xi~Peng.
\newblock Contrastive clustering, 2020.
\newblock URL \url{https://arxiv.org/abs/2009.09687}.

\bibitem[Lin et~al.(2024)Lin, Liu, Lin, Zou, Wu, Zeng, Chen, and
  Miao]{Lin_2024}
Yuanguo Lin, Yong Liu, Fan Lin, Lixin Zou, Pengcheng Wu, Wenhua Zeng, Huanhuan
  Chen, and Chunyan Miao.
\newblock A survey on reinforcement learning for recommender systems.
\newblock \emph{IEEE Transactions on Neural Networks and Learning Systems},
  35\penalty0 (10):\penalty0 13164–13184, October 2024.
\newblock ISSN 2162-2388.
\newblock \doi{10.1109/tnnls.2023.3280161}.
\newblock URL \url{http://dx.doi.org/10.1109/TNNLS.2023.3280161}.

\bibitem[Mao et~al.(2024)Mao, Wu, Chen, Hu, Jiang, Zhou, Lv, Fan, Hu, Wu, Hu,
  and Zhang]{mao2024stylized}
Yihuan Mao, Chengjie Wu, Xi~Chen, Hao Hu, Ji~Jiang, Tianze Zhou, Tangjie Lv,
  Changjie Fan, Zhipeng Hu, Yi~Wu, Yujing Hu, and Chongjie Zhang.
\newblock Stylized offline reinforcement learning: Extracting diverse
  high-quality behaviors from heterogeneous datasets.
\newblock In \emph{The Twelfth International Conference on Learning
  Representations}, 2024.
\newblock URL \url{https://openreview.net/forum?id=rnHNDihrIT}.

\bibitem[Mohamed and Rezende(2015)]{mohamed2015variational}
Shakir Mohamed and Danilo~Jimenez Rezende.
\newblock Variational information maximisation for intrinsically motivated
  reinforcement learning.
\newblock In \emph{Advances in neural information processing systems}, pages
  2125--2133, 2015.

\bibitem[Nachum et~al.(2019{\natexlab{a}})Nachum, Chow, Dai, and
  Li]{nachum19dice}
Ofir Nachum, Yinlam Chow, Bo~Dai, and Lihong Li.
\newblock Dualdice: Behavior-agnostic estimation of discounted stationary
  distribution corrections.
\newblock In Hanna~M. Wallach, Hugo Larochelle, Alina Beygelzimer, Florence
  d'Alch{\'{e}}{-}Buc, Emily~B. Fox, and Roman Garnett, editors, \emph{Advances
  in Neural Information Processing Systems 32: Annual Conference on Neural
  Information Processing Systems 2019, NeurIPS 2019, December 8-14, 2019,
  Vancouver, BC, Canada}, pages 2315--2325, 2019{\natexlab{a}}.
\newblock URL
  \url{https://proceedings.neurips.cc/paper/2019/hash/cf9a242b70f45317ffd281241fa66502-Abstract.html}.

\bibitem[Nachum et~al.(2019{\natexlab{b}})Nachum, Dai, Kostrikov, Chow, Li, and
  Schuurmans]{nachum2019algaedicepolicygradientarbitrary}
Ofir Nachum, Bo~Dai, Ilya Kostrikov, Yinlam Chow, Lihong Li, and Dale
  Schuurmans.
\newblock Algaedice: Policy gradient from arbitrary experience,
  2019{\natexlab{b}}.
\newblock URL \url{https://arxiv.org/abs/1912.02074}.

\bibitem[Pertsch et~al.(2020)Pertsch, Lee, and Lim]{pertsch2020spirl}
Karl Pertsch, Youngwoon Lee, and Joseph~J. Lim.
\newblock Accelerating reinforcement learning with learned skill priors, 2020.
\newblock URL \url{https://arxiv.org/abs/2010.11944}.

\bibitem[Strehl and Ghosh(2002)]{strehl2002cluster}
Alexander Strehl and Joydeep Ghosh.
\newblock Cluster ensembles—a knowledge reuse framework for combining
  multiple partitions.
\newblock \emph{Journal of Machine Learning Research}, 3\penalty0
  (Dec):\penalty0 583--617, 2002.

\bibitem[Tang et~al.(2024)Tang, Abbatematteo, Hu, Chandra, Martín-Martín, and
  Stone]{tang2024deepreinforcementlearningrobotics}
Chen Tang, Ben Abbatematteo, Jiaheng Hu, Rohan Chandra, Roberto
  Martín-Martín, and Peter Stone.
\newblock Deep reinforcement learning for robotics: A survey of real-world
  successes, 2024.
\newblock URL \url{https://arxiv.org/abs/2408.03539}.

\bibitem[Tarasov et~al.(2022)Tarasov, Nikulin, Akimov, Kurenkov, and
  Kolesnikov]{tarasov2022corl}
Denis Tarasov, Alexander Nikulin, Dmitry Akimov, Vladislav Kurenkov, and Sergey
  Kolesnikov.
\newblock {CORL}: Research-oriented deep offline reinforcement learning
  library.
\newblock In \emph{3rd Offline RL Workshop: Offline RL as a ''Launchpad''},
  2022.
\newblock URL \url{https://openreview.net/forum?id=SyAS49bBcv}.

\bibitem[Wang et~al.(2024)Wang, Deng, Sanchez, Wang, McGuinness, O'Connor, and
  Redmond]{wang2024datasetclusteringimprovedoffline}
Qiang Wang, Yixin Deng, Francisco~Roldan Sanchez, Keru Wang, Kevin McGuinness,
  Noel O'Connor, and Stephen~J. Redmond.
\newblock Dataset clustering for improved offline policy learning, 2024.
\newblock URL \url{https://arxiv.org/abs/2402.09550}.

\bibitem[Wang et~al.(2023)Wang, Hunt, and Zhou]{wang2023diffusion}
Zhendong Wang, Jonathan~J Hunt, and Mingyuan Zhou.
\newblock Diffusion policies as an expressive policy class for offline
  reinforcement learning.
\newblock In \emph{The Eleventh International Conference on Learning
  Representations}, 2023.
\newblock URL \url{https://openreview.net/forum?id=AHvFDPi-FA}.

\bibitem[Wu et~al.(2019)Wu, Tucker, and
  Nachum]{wu2019behaviorregularizedofflinereinforcement}
Yifan Wu, George Tucker, and Ofir Nachum.
\newblock Behavior regularized offline reinforcement learning, 2019.
\newblock URL \url{https://arxiv.org/abs/1911.11361}.

\bibitem[Xie et~al.(2016)Xie, Girshick, and Farhadi]{xie2016unsupervised}
Junyuan Xie, Ross Girshick, and Ali Farhadi.
\newblock Unsupervised deep embedding for clustering analysis, 2016.
\newblock URL \url{https://arxiv.org/abs/1511.06335}.

\bibitem[Yu et~al.(2021)Yu, Kumar, Rafailov, Rajeswaran, Levine, and
  Finn]{yu2021combo}
Tianhe Yu, Aviral Kumar, Rafael Rafailov, Aravind Rajeswaran, Sergey Levine,
  and Chelsea Finn.
\newblock Combo: Conservative offline model-based policy optimization.
\newblock In M.~Ranzato, A.~Beygelzimer, Y.~Dauphin, P.S. Liang, and J.~Wortman
  Vaughan, editors, \emph{Advances in Neural Information Processing Systems},
  volume~34, pages 28954--28967. Curran Associates, Inc., 2021.

\bibitem[Zhang et~al.(2020)Zhang, Dai, Li, and Schuurmans]{zhang20gendice}
Ruiyi Zhang, Bo~Dai, Lihong Li, and Dale Schuurmans.
\newblock Gendice: Generalized offline estimation of stationary values.
\newblock In \emph{8th International Conference on Learning Representations,
  {ICLR} 2020, Addis Ababa, Ethiopia, April 26-30, 2020}. OpenReview.net, 2020.
\newblock URL \url{https://openreview.net/forum?id=HkxlcnVFwB}.

\end{thebibliography}
\bibliographystyle{plainnat}

%
%
%
%
%
%
%
%

\newpage
\appendix
\tableofcontents



\section{Appendix: Mathmatical discussion}
\subsection{Proof of Theorem ~\ref{thm: finite convergence}}
\label{app: main proof}
\begin{theorem}[Finite convergence of Algorithm \ref{alg:pg_kmeans}]
Algorithm 1 is guaranteed to converge in $O(k^N)$ iterations.    
\end{theorem}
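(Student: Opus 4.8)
The plan is to adapt the classical finite-termination proof for Lloyd's $k$-means \cite{bottou1995convergence} to the log-likelihood objective $J(W,\theta)$ of \eqref{equ: objective function}. Here "convergence" means that the assignment matrix $W$ eventually stops changing, which is exactly the termination criterion checked in Algorithm~\ref{alg:pg_kmeans}; no claim of global optimality is made. Two facts drive the argument: each half-step of the EM loop weakly increases $J$, and whenever $W$ actually changes the increase is strict, so no assignment matrix can recur.

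First I would exploit the separable structure of the objective. Writing $\ell_{ij}(\theta_j):=\sum_{t=1}^H\log\mathbb{P}(a_{i,t}\mid\theta_j,s_{i,t})$ for the log-likelihood that central policy $j$ assigns to trajectory $\tau_i$, we have $J(W,\theta)=\sum_{i=1}^N\sum_{j=1}^k w_{i,j}\,\ell_{ij}(\theta_j)$. In the M-step, for fixed $W$ the maximization over $\theta$ decouples across clusters, since $\theta_j$ enters only through $\sum_{i:\,w_{i,j}=1}\ell_{ij}(\theta_j)$ — precisely the behavior-cloning (maximum-likelihood) loss on the trajectories currently in cluster $j$. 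Hence, provided the M-step returns a per-cluster maximizer (or is implemented so that it never replaces the incumbent $\theta_j$ by something worse), $J(W_{t-1},\theta_t)\ge J(W_{t-1},\theta_{t-1})$. In the E-step, for fixed $\theta$ the maximization over one-hot $W$ decouples across trajectories: the optimal label of $\tau_i$ is $\arg\max_j\ell_{ij}(\theta_j)=\arg\max_j\log\mathbb{P}(\tau_i\mid\theta_j)$, which is exactly the rule used in the E-step. Hence $J(W_t,\theta_t)\ge J(W_{t-1},\theta_t)$. Chaining these inequalities over iterations shows the value sequence $\{J(W_t,\theta_t)\}_t$ is non-decreasing.

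Next I would turn monotonicity into a no-recurrence statement. Let $g(W):=\max_\theta J(W,\theta)$; after the M-step at round $t$ the running value equals $g(W_{t-1})$, and the chain above gives $g(W_t)\ge J(W_t,\theta_t)\ge J(W_{t-1},\theta_t)=g(W_{t-1})$. If the algorithm does not terminate at round $t$ then $W_t\neq W_{t-1}$; since the E-step selects $W_t$ as a maximizer of $J(\cdot,\theta_t)$ over one-hot matrices and — under a fixed tie-breaking convention that keeps the incumbent label whenever it is already optimal, the natural reading of the convergence check — $W_{t-1}$ is then not a maximizer, we get $J(W_t,\theta_t)>J(W_{t-1},\theta_t)$ and hence $g(W_t)>g(W_{t-1})$. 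So the induced map $W_{t-1}\mapsto W_t$ never revisits an assignment matrix. There are at most $k^N$ one-hot assignment matrices on $N$ points, so the convergence check must fire within $k^N$ iterations; and once $W$ stabilizes, the M- and E-steps reproduce it, so it is a genuine fixed point. This yields the claimed finite, indeed $O(k^N)$, convergence.

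The hard part, and the step that is an assumption rather than a calculation, is the idealization of the M-step: "train a behavior cloning model" is in general a non-convex stochastic optimization that need not attain the exact per-cluster MLE, so the monotonicity inequality $J(W_{t-1},\theta_t)\ge J(W_{t-1},\theta_{t-1})$ has to be guaranteed by the subroutine (e.g.\ by warm-starting each policy at its previous parameters and accepting an update only if it does not decrease that cluster's log-likelihood). A secondary subtlety is E-step tie-breaking: exact ties in $\log\mathbb{P}(\tau_i\mid\theta_j)$ could let $W$ oscillate among equal-value configurations, so the statement should be read under a deterministic tie-breaking rule favoring the incumbent label, or under a genericity assumption ruling out such ties. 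With those two conventions fixed, the counting argument above goes through.
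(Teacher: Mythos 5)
Your proof is correct and follows essentially the same route as the paper's: both arguments track the potential $g(W)=\max_\theta J(W,\theta)$, show it strictly increases whenever the assignment matrix changes, and conclude termination from the bound of $k^N$ possible one-hot assignment matrices. The only difference is that you make explicit two points the paper glosses over --- that the strict inequality at the E-step needs a tie-breaking convention favoring the incumbent label, and that the M-step must actually (weakly) maximize the per-cluster log-likelihood --- which strengthens rather than changes the argument.
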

\begin{proof}
    We denote the original assignment matrix $W$ as $W^0$, policy parameters as $\theta^0$. After iteration $t$, the assignment matrix and parameters become $W^t$ and $\theta^t$. Assume that the policy converges at after $T\in \mathbb{Z}^+\cup\{\infty\}$ iterations.
    From the definition we see that $\forall t=1,2,\cdots T$,
    \begin{align}
        \max_{\theta} J(W^{t-1}, \theta)&= J(W^{t-1}, \theta^t)\\
        &< \max_{W}J(W, \theta^t)\\
        &=J(W^t, \theta^t)\\
        &\leq \max_{\theta} J(W^t, \theta).
    \end{align}
    Therefore $\{\max_{\theta} J(W^{t-1}, \theta)\}_{t=1}^T $ is a strictly increasing sequence. Because there are at most $k^N$ different valid values for $W$, the sequence length $T$ is no larger than $k^N$, which concludes the proof.
\end{proof}

\subsection{Policy-based trajectory cluster and k-coloring}
\label{app: k-coloring}
One fundamental reason why K-means clustering cannot be directly applied to policy-based trajectory clustering is the absence of a well-defined distance metric in trajectory space. A natural idea is to define the "distance" between two trajectories based on their compatibility, i.e., whether they could have been generated by the same policy:

\begin{equation}
\label{equ:distance}
d(\tau_1, \tau_2) = 
\begin{cases} 
1, & \exists(s, a) \in \tau_1, (s, a') \in \tau_2, a \neq a' \\ 
0, & \text{otherwise} 
\end{cases}
\end{equation}

However, this definition does not satisfy the triangle inequality, i.e., 

\begin{equation}
|d(x, y)| \leq |d(x, z)| + |d(z, y)|
\end{equation}

which means it is not a proper metric and provides limited mathematical utility.

For example, consider the trajectories $x = [(s_1, a_1)]$, $y = [(s_1, a_2)]$, and $z = [(s_2, a_1)]$. In this case, we observe that:

\begin{equation}
|d(x, y)| = 1 \not\leq |d(x, z)| + |d(z, y)| = 0.
\end{equation}

We also note that, in the absence of additional assumptions, Equation (\ref{equ:distance}) encapsulates all the reliable information available in the dataset. That is, any clustering scheme $W$ that satisfies the condition that the total intra-cluster distance is zero provides a valid solution:

\begin{equation}
D(W) = \sum_{k=1}^K\sum_{i=1}^N\sum_{j=1}^N w_{i, k}w_{j, k}d(\tau_i, \tau_j) = 0.
\end{equation}

This formulation precisely aligns with the definition of the K-coloring problem.

Furthermore, we can construct a simple proof for the following theorem:

\begin{theorem}[Reduction from K-coloring to Policy Clustering]
For any K-coloring problem with $N$ nodes and a maximum degree of $d$, there exists a Markov Decision Process (MDP) with $|\mathcal{S}|\geq 2d$ and $|H|\geq d$, along with a corresponding dataset, such that the dataset can be generated by $K$ distinct policies, and its valid clustering corresponds to a solution of the original K-coloring problem.
\end{theorem}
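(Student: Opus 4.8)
The plan is to give an explicit polynomial-time construction that turns an arbitrary instance of $K$-coloring into an MDP together with a trajectory dataset, such that the proper $K$-colorings of the graph are exactly the valid (zero intra-cluster distance) clusterings of the dataset. Fix a graph $G = (V, E)$ with $|V| = N$ and maximum degree $d$, and an enumeration of its edges $E = \{e_1, \dots, e_m\}$. I would build an MDP whose state space is partitioned into ``gadget'' states, one small gadget per edge of $G$. The key idea is that each trajectory should correspond to a node of $G$, and two trajectories should conflict (have distance $1$ in the sense of Equation~\eqref{equ:distance}) if and only if the two nodes are adjacent in $G$. Since the distance is defined by the existence of a shared state on which the two trajectories take different actions, I need, for each edge $e = (u, v)$, a dedicated state $s_e$ that appears in both the trajectory for $u$ and the trajectory for $v$, on which they are forced to disagree; and I need trajectories for non-adjacent nodes never to share a decision state where they disagree. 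The cleanest way to enforce the latter is to make every shared state belong to exactly one edge-gadget and to have exactly two ``roles'' at $s_e$ — one action for the lower-indexed endpoint, one for the higher-indexed endpoint — so that only the two endpoints of $e$ ever visit $s_e$.

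Concretely, I would proceed as follows. First, construct the trajectory for node $v$: it is a fixed-length walk that visits, in order of edge index, the gadget states $s_e$ for every edge $e$ incident to $v$; at $s_e$ it plays action $a_{\text{low}}$ if $v$ is the lower-indexed endpoint of $e$ and $a_{\text{high}}$ otherwise, and then transitions (possibly through a few ``routing'' states unique to $v$, or padded with a neutral self-loop-like state carrying a fixed action) to the next relevant gadget. Because a node has at most $d$ incident edges, the horizon can be taken to be $O(d)$ (with a small constant for routing/padding), and the number of distinct gadget and routing states is $O(N + m) = O(Nd)$, which comfortably satisfies $|\mathcal{S}| \geq 2d$ and $|H| \geq d$ after padding up to the stated bounds. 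Second, check the conflict structure: if $(u,v) \in E$ then the trajectories for $u$ and $v$ both visit $s_{(u,v)}$ and take different actions there, so $d(\tau_u, \tau_v) = 1$; if $(u,v) \notin E$, then $\tau_u$ and $\tau_v$ share no gadget state (gadget states are edge-specific, routing states are node-specific), and on any padding state they take the same neutral action, so $d(\tau_u, \tau_v) = 0$. Hence the conflict graph on trajectories is isomorphic to $G$. Third, observe that a clustering $W$ with $D(W) = 0$ is exactly an assignment of nodes to $K$ groups with no two adjacent nodes in the same group — i.e., a proper $K$-coloring — and conversely any proper $K$-coloring yields such a clustering, and each cluster is realizable by a single deterministic policy (define it to agree with the assigned trajectories on the states they visit, and arbitrarily elsewhere; consistency is guaranteed precisely because no two trajectories in a cluster conflict). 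This establishes the claimed correspondence.

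The main obstacle I anticipate is the bookkeeping needed to guarantee that \emph{only} the two endpoints of an edge ever visit that edge's gadget state, and that the per-node trajectories can be ``stitched together'' into a single legal walk of bounded horizon in a fixed MDP — i.e., that the transition kernel $P$ can be chosen consistently so that every node-trajectory is a valid path. One has to be careful that the transition out of $s_e$ can depend on the action taken (so that $u$'s copy of the walk and $v$'s copy can head to different next-states after $s_e$), and that different nodes' trajectories passing through their own routing states don't accidentally create a shared decision state; using node-indexed routing states resolves this but must be spelled out. A secondary, more cosmetic point is matching the exact inequalities $|\mathcal{S}| \geq 2d$ and $|H| \geq d$: these are lower bounds, so after the construction I would simply pad the state space with unused dummy states and pad each trajectory with a shared neutral ``wait'' state/action to reach the required horizon, none of which changes the conflict structure. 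I would also remark that combined with the forward reduction sketched in Section~\ref{subsec: Ambiguity in Policy Clustering} (trajectory clustering reduces to $K$-coloring feasibility, which is in NP), this two-way reduction yields the NP-completeness claimed in the paper, and it exhibits instances with genuinely non-unique optimal clusterings whenever $G$ has multiple proper $K$-colorings up to nothing — e.g., an empty graph on $N \geq 2$ nodes with $K \geq 2$.
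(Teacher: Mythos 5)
Your proposal is correct and follows the same reduction strategy as the paper: for each edge $(u,v)$ you introduce a decision state visited by both endpoint trajectories on which they are forced to take different actions, so that the conflict relation of Equation (\ref{equ:distance}) reproduces the adjacency of $G$ and a zero-intra-cluster-distance clustering is exactly a proper $K$-coloring. The one substantive difference is state allocation. The paper's construction (Algorithm~\ref{alg:reduction}) greedily reuses states: for edge $(v_i,v_j)$ it picks the smallest $s_l$ not already appearing in $t_i\cup t_j$, which keeps the number of states at most $2d$ but allows the same state to serve several different edges; this can create spurious conflicts between non-adjacent nodes (two trajectories sharing a reused state with opposite actions), so the resulting conflict graph is in general only a supergraph of $G$. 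Your choice of a dedicated gadget state per edge costs $O(Nd)$ states instead of $2d$, which is still compatible with the stated lower bounds $|\mathcal{S}|\ge 2d$ and $H\ge d$ after padding, and it yields an exact isomorphism between the conflict graph and $G$ --- which is what the direction ``proper coloring $\Rightarrow$ valid clustering'' of the equivalence actually requires. You are also more explicit than the paper on two points it leaves implicit: that a single transition kernel $P$ can be chosen so that all node trajectories are legal paths of one MDP (by letting the successor of $s_e$ depend on the action taken), and that each conflict-free cluster is realizable by a single deterministic policy. Both additions strengthen the argument rather than change its substance.
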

\begin{proof}
    We can construct a dataset by following steps:
    \begin{algorithm}[ht!]
    \caption{Reduction from K-coloring to policy clustering}
    \label{alg:reduction}
    \begin{algorithmic}[1]
        \STATE \textbf{Input:} a graph $G=(V, E), s.t. \max(\mathrm{degree}(v))=d$
        \STATE \textbf{Initialize} Initialize a list $t_i$ for each node $v_i$.
        \FOR{$i = 1$ to $|V|$}
            \FOR{$j= i+1$ to $|V|$}
                \STATE If $(v_i, v_j)\in E$, find the smallest integer $l$ such that $s_l\notin t_i\cup t_j$
                \STATE Append $t_i$ with $(s_l, a_1)$, $t_j$ with $(s_l, a_2)$
            \ENDFOR
        \ENDFOR
        \STATE Pad $\{t_i\}_i$ to length $H$ and concatenate them to get trajectories $\{\tau_i\}_i$.
        \STATE \textbf{Output:} Dataset for clustering $\{\tau_i\}_{i=1}^{|V|}$.
    \end{algorithmic}
    \end{algorithm} 
    
    Because $|t_i\cup t_j|\leq \mathrm{degree}(v_i)+\mathrm{degree}(v_j)\leq 2d$, $l$ would never take value over $2d$. And $\max_i(|t_i|)\leq \max(\mathrm{degree}(v))=d$, so horizon $H>d$ is enough.
\end{proof}

This result implies that the general policy-guided clustering problem is NP-complete, for we have known that 3-coloring with $d\geq 2$ is NP-complete.

\subsection{Policy ambiguity}
\label{app: policy ambiguity}

The above reduction from k-coloring to policy-based clustering demonstrates that policy-based clustering may have multiple solutions. However, this is not necessarily unacceptable. As long as the center policies of the clusters remain consistent, different clustering solutions merely arise due to the fact that these policies exhibit identical expressions in certain instances. The more critical issue is that not only can the trajectory clustering solutions be non-unique, but the center policies themselves may also constitute entirely different constructure. Consider the following example:

Let us examine a contextual bandit setting with only two states and two actions. This environment allows for exactly four distinct deterministic policies and four possible trajectories:

\begin{itemize}
    \item \textbf{Policy 1:} $(s_1, a_1), (s_2, a_1)$
    \item \textbf{Policy 2:} $(s_1, a_1), (s_2, a_2)$
    \item \textbf{Policy 3:} $(s_1, a_2), (s_2, a_1)$
    \item \textbf{Policy 4:} $(s_1, a_2), (s_2, a_2)$
\end{itemize}

Using either Policy 1 and Policy 4, or Policy 2 and Policy 3, we can generate all four possible trajectories. Thus, for any dataset collected from this contextual bandit, there will always exist at least two valid clustering solutions. Furthermore, when the occurrence probabilities of $s_1$ and $s_2$ are equal, these two clustering solutions are completely irrelevant with each other.

The experimental results demonstrate that the non-uniqueness of policy combinations is not merely a theoretical possibility but frequently manifests in practical datasets. In the deprecated environment analogous to Takeball, we observe a modified scenario where four balls are randomly permuted across four fixed positions, while the expert policy $\pi_i$ still maintains the strategy of collecting the $i$-th numbered ball and returning. 

Notably, the neural network rapidly learns a distinct policy categorization approach: instead of differentiating policies by ball numbering, it develops four position-specific policies that collect balls based on their spatial locations. Although both policy combinations generate identical trajectory distributions within the environment, the position-based strategy exhibits significantly simpler implementation requirements. Each agent in this paradigm only needs to learn navigation to a fixed location, avoiding the more complex task of simultaneously identifying specific ball numbers and selecting among four potential destinations.

Through empirical analysis, we find that random initialization almost invariably leads the network to discover the position-based discrimination method. In contrast, achieving number-based trajectory differentiation requires initializations remarkably close to the target clustering configuration (specifically, Normalized Mutual Index (NMI) > 0.6 in our experiments). 

This fundamental non-uniqueness in policy combinations constitutes a critical challenge that directly impacts problem solvability, as it introduces substantial ambiguity in policy identification and may lead to suboptimal solutions that fail to capture the intended behavioral semantics.
\section{Appendix: Implementation details}
\label{app: Implementation details}
\subsection{Dataset settings}
\label{app: dataset_settings}
\paragraph{D4RL}
We directly use the medium-expert datasets provided by D4RL for four Gym environments. Each dataset consists of 1,000,000 timesteps. The original datasets are not segmented into episodes nor labeled with their generating policies. Therefore, we divide the dataset into episodes based on the `Done` signal and assign labels accordingly.

Since the first half of the dataset corresponds to medium-level data and the latter half to expert-level data, we partition the dataset by maximizing the average return of the first and second halves. However, as it is possible that some of the initial expert episodes achieve relatively low returns, this method may introduce an error of up to five episodes (approximately 5,000 timesteps). Nonetheless, this minor misclassification has a negligible impact on the final NMI and does not affect the quantitative conclusions presented earlier.

\paragraph{Gridworld}
The Diagonal, Takeball and Extra environments are implemented in JAX. To align with the MDP framework, we use a fixed 9×9 grid map. The action space consists of five discrete actions: movement in four directions and a standby action. The maximum episode length is set to 40 timesteps, and an episode terminates immediately when the agent reaches the goal position. Additionally, to simulate stochastic dynamics, each step has a 0.3 probability of ignoring the action input and taking a random action.

The observation space contains the full state information. In Diagonal, the observation is represented as a 9×9×3 matrix obtained by stacking the wall map, agent map, and goal map. In Takeball, the observation further includes four additional one-hot matrices to encode the positions of four balls, resulting in a 9×9×4 representation. In Extra, the observation further includes two additional one-hot matrices to encode the positions of two special grid, resulting in a 9×9×2 representation.

For Diagonal and Takeball environments, all nine expert policies are rule-based, relying solely on the current state without considering historical information. They are implemented in JAX. For each expert, we collect 20,000 trajectories for evaluation. As a result, the Diagonal dataset consists of a total of 100,000 trajectories, while the Takeball dataset contains 80,000 trajectories. And for Extra environment. We train three policies using PPO algorithm by setting positive/zero/negative rewards for the agent to reach the special grid. Other details are the same as Diagonal and Takeball. The Extra dataset consists of a total of 60,000 trajectories.

\subsection{Dataset Collection Methods in Gridworld}

\paragraph{Diagonal} There are five different rule-based policies:

\begin{enumerate}
    \item Always move to the right, until the wall is reached.
    \item Always move to the down, until the wall is reached.
    \item Move to the right iff at the left-down half of the gridworld.
    \item Move to the right iff at the black grid, if we seem the gridworld as a chess board.
    \item Move to the right iff at the white grid.
\end{enumerate}

\paragraph{Takeball} There are four different rule-based policies, i-th policy will pick the i-th ball first.

\paragraph{Extra} We use PPO algorithm to train three policies by giving positive/zero/negative rewards for the agent to reach the special grid. The rewards are set to +10, 0 or -10, respectively. When the agent reach the goal, the reward is set to +10. The agent will get a penalty of -0.3 for each step and each unit of distance to the goal.

\paragraph{PathFollowing} We use PPO algorithm to train three policies by giving extra penalty for the agent when it off path. For each step, the reward is negative distance to the goal add the following penalty for each policy:

\begin{enumerate}
    \item No extra penalty.
    \item 5 times of distance to the polyline $(-1,-1),(-1,1),(1,1)$.
    \item 5 times of distance to the polyline $(-1,-1),(1,-1),(1,1)$.
\end{enumerate}

These policies are corresponding to: No preference, prefer to go up first and prefer to go right first.

\subsection{Baseline Methods}
\label{app: baseline methods}

\paragraph{Deep Embedded Clustering (DEC).}  
Deep Embedded Clustering (DEC) \cite{xie2016unsupervised} is a widely used deep clustering method that integrates representation learning with clustering optimization. It first pretrains an autoencoder to map high-dimensional data into a low-dimensional latent space. Clustering is then performed in this learned space by iteratively refining cluster assignments using a self-training objective. Specifically, DEC employs a Student's t-distribution to measure the similarity between data points and cluster centers and minimizes a Kullback-Leibler (KL) divergence loss to refine embeddings for more compact and well-separated clusters.

In this work, we adapt DEC for policy clustering by modifying its encoding process to focus on capturing policy-related information rather than trajectory-level dynamics. The details are attached in Appendix \ref{app: ae_imp}.

\paragraph{SORL} Stylized Offline Reinforcement Learning(SORL) \cite{mao2024stylized} is a offline reinforcement learning algorithm that designed to learn diverse policies with varied styles. The first step of the algorithm is soft clustering, that is assigning weight to each trajectories on clusters. In this experiment, we impliment the clustering step only, and use hard clustering instead of soft clustering to represent SORL algorithm.

\subsection{Details of PG-Kmeans Implementation}
\label{app: Details of PG-Kmeans}
\subsubsection{Best-of-N PG-Kmeans}
\label{subsec:best of N pgkmeans}

Similar to K-means, PG-Kmeans is highly sensitive to initialization. Moreover, since PG-Kmeans optimizes a neural network for each cluster, small clusters are particularly prone to severe overfitting and non-convex optimization issues, which can ultimately lead to their disappearance.

This instability makes PG-Kmeans less robust during training. To mitigate this issue, we propose the Best-of-N technique. As shown in Algorithm~\ref{alg:best of n pg_kmeans}, we use the final objective function \( J(W,\theta) \) as an internal metric to evaluate the quality of different runs in the absence of ground truth. The best clustering result is then selected for output.

\begin{algorithm}[ht!]
    \caption{Best-of-N PG-Kmeans}
    \label{alg:best of n pg_kmeans}
    \begin{algorithmic}[1]
        \STATE \textbf{Input:} Input for PG-Kmeans and number of runs \( N \).
        \FOR{$t = 1$ to $N$}
            \STATE Run PG-Kmeans (Algorithm~\ref{alg:pg_kmeans}) and obtain the corresponding output and \( J(W, \theta) \).
        \ENDFOR 
        \STATE \textbf{Output:} Result from the run with the highest \( J(W, \theta) \).
    \end{algorithmic}
\end{algorithm}

\subsubsection{Over-parameterization and Merging}
\label{subsec: over-parameterization and merging}

To further improve optimization performance, we introduce the over-parameterization and merging technique. The algorithm faces two primary challenges: (1) center policies often overfit to short and low-density trajectories, causing them to become trapped in incorrect clusters, and (2) with suboptimal initialization, different clusters may be mistakenly merged during clustering. Over-parameterization mitigates these issues by initializing the cluster count \( k \) larger than the true number of clusters \( k^* \), enhancing clustering robustness. However, this approach introduces a new challenge: data points from the same cluster may become dispersed, reducing clustering quality. The merging step addresses this issue by consolidating similar clusters, counteracting the adverse effects of a large \( k \) while preserving clustering coherence.  

For detailed experimental results, see Section~\ref{subsec:results and discussion}.

\begin{algorithm}[ht!]
    \caption{Merge Clusters}
    \label{alg:merge}
    \begin{algorithmic}[1]
        \STATE \textbf{Input:} Datasets and center policies $\{(\mathcal{D}_i, \pi_i)\}_{i=1}^k$, target number of clusters $k^*$.
        \FOR{$i= 1$ to $k-k^*$}
            \STATE Find indices $i, j = \arg\min_{i\neq j}\sum_{\tau \in \mathcal{D}_j}\log \mathbb{P}(\tau\mid \theta_i)$.
            \STATE Merge dataset $j$ into dataset $i$, discard policy $\pi_j$. Renumber datasets and policies from 1 to $k-i$.
        \ENDFOR
        \STATE \textbf{Output:} Datasets and center policies $\{(\mathcal{D}_i, \pi_i)\}_{i=1}^{k^*}$.
    \end{algorithmic}
\end{algorithm}

\subsection{Details of Representation Learning-based Algorithm Implementation}
\label{app: ae_imp}
In practice, we found that the most important information to distinguish different strategies for a trajectory are given by a contiunous sub-trajectory. Inspired by this observation, we design the encoder model as follows: use a GRU network to encode every prefix and use attention mechanism to get the weighted sum of the outputs of GRU. That is, if the output of GRU is $Y=(y_1,\dots, y_H)$, we will train three matrix $Q,K,V$, then the output of attention is:

$$
a=\operatorname{softmax}(Q (KY)^\top) (VY)
$$

During decoding, we condition on the observation at each timestep along with the embedding to generate the action distribution, using the negative log-likelihood of the true action under this distribution as the reconstruction loss. Under this formulation, the encoder is explicitly designed to capture only policy information, i.e., the action generation mechanism.

To highlight the advantages of CAAE over other representation learning-based algorithms such as DEC and VAE, we employed the same Encoder and Decoder structures used in CAAE for DEC and VAE.

\subsection{Network Architectures and Training Details}
\label{app: network architectures and training details}
All networks are trained using the Adam optimizer with a learning rate of \( 1 \times 10^{-3} \). The parameter $\alpha$ of CAAE is set to 1. For the D4RL environments, observations are pre-normalized using statistics from the training set, while no normalization is applied to GridWorld inputs.

\paragraph{Policy Networks.}  
For continuous environments, policies use a \texttt{MultivariateNormalDiag} distribution, where two fully connected (FC) layers process extracted features to produce \texttt{action\_logits} and \texttt{action\_std}. For discrete environments, policies use a \texttt{Categorical} distribution, with a single FC layer mapping extracted features to \texttt{action\_logits}.

\paragraph{Feature Extractors.}  
The feature extractors for different models are summarized in Table~\ref{tab:network_architecture}. All networks use the ReLU activation function.

\begin{table}[h]
    \centering
    \caption{Network architectures for different models.}
    \label{tab:network_architecture}
    \begin{tabular}{|l|c|c|c|}
        \hline
        Model       & Fully Connected Layers & GRU Hidden Size & GRU Output Size \\ \hline
        PG-Kmeans  & (128, 128) & 64  & 128  \\ 
        VAE/DEC/CAAE-Encoder & (128, 128)  & 64  & 128   \\ 
        VAE/DEC/CAAE-Decoder & (128, 32, 32)  & N/A & N/A \\ 
        \hline
        \hline
        \multicolumn{2}{|l|}{VAE/DEC/CAAE-Encoder-Attention-Heads}&\multicolumn{2}{|c|}{2}\\
        \multicolumn{2}{|l|}{VAE/DEC/CAAE-Encoder-Attention-Feature-Size}&\multicolumn{2}{|c|}{2×8}\\
        \hline
    \end{tabular}
\end{table}

\section{Appendix: Full Results and Ablation Studies}
\subsection{Full Results}
\label{app: results}
Here, we provide experimental results for single-run PG-Kmeans and Best-of-5 DEC for comparison with PG-Kmeans. Notably, even when DEC is allowed to enhance its performance by repeatedly running and selecting the best result, it still fails to achieve satisfactory classification in certain environments. This limitation arises because DEC lacks an explicit policy clustering representation, meaning that its embedding-based clustering approach does not guarantee successful categorization in complex environments. Also, as we discussed in Section~\ref{subsec:results and discussion}, there is no significant negative correlation between loss and NMI for CAAE algorithm, so we didn't do Best-of-5 experiment for CAAE algorithm.

We also evaluated the performance of PG-Kmeans' centroid strategy, the experimental results are presented in Table~\ref{table:pgkmeans_k_value_results}, with classification histograms shown in Figure~\ref{fig:histogram}. Across all four environments, PG-Kmeans successfully classified trajectories generated by different policies. When $k > 5$, the accuracy of classification exceeded 50\% in all cases. In most cases, the best output policy reaches the performance of agents trained with 10\% BC\citep{tarasov2022corl}.
\begin{table*}[!t]
    \centering
    \footnotesize
    \begin{tabular}{|l|c|c|c|c|}
        \hline
        Task        & Single-run PG-Kmeans & PG-Kmeans   & DEC & DEC (best of 5)* \\ \hline
        halfcheetah & 0.495 (\scriptsize{$50\% \in [0.99, 1.00]$}) & \textbf{0.989} \scriptsize{$\pm$ 0.000}  & 0.945 \scriptsize{$\pm$ 0.031}  & \textbf{0.989} \scriptsize{$\pm$ 0.000} \\ 
        ant         & 0.745 (\scriptsize{$85\% \in [0.83, 0.94]$})  & \textbf{0.924} \scriptsize{$\pm$ 0.003}  & 0.390 \scriptsize{$\pm$ 0.512}  & 0.756 \scriptsize{$\pm$ 0.003} \\ 
        walker2d    & 0.557 (\scriptsize{$50\% \in [0.70, 0.99]$})  & \textbf{0.942} \scriptsize{$\pm$ 0.005}  & 0.767 (\scriptsize{$70\% \in [0.99, 1.00]$})  & \textbf{0.990} \scriptsize{$\pm$ 0.000} \\ 
        hopper      & 0.258 (\scriptsize{$25\% \in [0.99, 1.00]$})  & \textbf{0.994} (\scriptsize{$80\% \in [0.99, 1.00]$})  & 0.000 \scriptsize{$\pm$ 0.000}  & 0.000 \scriptsize{$\pm$ 0.000} \\ \hline
        Diagonal    & 0.892 \scriptsize{$\pm$ 0.032}  & \textbf{0.920} \scriptsize{$\pm$ 0.001}  & 0.276 \scriptsize{$\pm$ 0.017}  & 0.287 \scriptsize{$\pm$ 0.001} \\ 
        Takeball    & \textbf{0.996} \scriptsize{$\pm$ 0.002}  & \textbf{0.997} \scriptsize{$\pm$ 0.000}  & 0.054 \scriptsize{$\pm$ 0.097}  & 0.175 \scriptsize{$\pm$ 0.027} \\ 
        \hline
    \end{tabular}
    \caption{Normalized Mutual Information (NMI) scores for all methods on D4RL and GridWorld environments, averaged over at least 10 random seeds. For experiments with clearly distinguishable success or failure outcomes, we report the probability of success along with the NMI range conditioned on successful trials. For all other cases, we assume Gaussian noise and report the 95$\%$ confidence interval. *Note: DEC (best of 5) is not a practically feasible algorithm, as ground truth labels are unavailable in real-world clustering scenarios. Without access to true labels, selecting the best result is infeasible. In contrast, PG-Kmeans (best of N) remains a valid approach, as it relies on the final objective function \( J(W, \theta) \) as an internal metric to determine the best clustering result for output.}
    \label{tab:app-NMI}
\end{table*}

\begin{table*}[!t]
    \centering
    \begin{tabular}{|l|c|c|c|c|c|c|}
        \hline
        Task        & Single-run PG-Kmeans & PG-Kmeans  & BC          & 10$\%$-BC     & AWAC     & CQL      \\ \hline
        halfcheetah & 56.5      & 83.4                  & 55.9        & 90.1          & 93.6     & 95.6     \\ 
        ant         & 76.3      & 130.4                 & /           & /             & /        & /        \\
        walker2d    & 44.3      & 104.5                 & 99.0        & 108.7         & 49.4     & 109.6    \\ 
        hopper      & 42.48     & 87.2                  & 52.3        & 111.2         & 52.7     & 99.3     \\ 
        \hline
    \end{tabular}
    \caption{Normalized returns of PG-Kmeans and baseline offline RL algorithms on D4RL datasets. Results for other methods are taken from the CORL benchmark \cite{tarasov2022corl}. PG-Kmeans is initialized with four cluster centers, and the reported returns are obtained by evaluating the center policies of the two non-trivial (non-zero) clusters in each environment. The results indicate that PG-Kmeans significantly outperforms Behavior Cloning. However, for efficiency reasons, center policies are not fully trained to convergence during optimization. As a result, even with nearly perfect clustering, the learned center policies may not always serve as optimal action generators. Notably, PG-Kmeans operates as a semi-supervised learning method, requiring only a minimal amount of return signals for evaluation, yet achieving performance comparable to fully supervised RL algorithms.}
    \label{app:tab Normalized returns}
\end{table*}

\begin{table*}[!t]
    \centering
    \caption{NMI of CAAE with different cluster counts \( k \) in different environments. The results are averaged over 10 random seeds.}
\begin{tabular}{|l|c|c|c|c|c|}
\hline
Task &  k=4 &  k=5 &  k=6 &  k=7 &  k=8 \\ \hline
Ant         & \textbf{0.88} & 0.87 & 0.83 & 0.81 & 0.79 \\
HalfCheetah & \textbf{0.90} & 0.85 & 0.81 & 0.80 & 0.75 \\
Hopper      & \textbf{0.66} & 0.55 & 0.50 & 0.47 & 0.46 \\
Walker2d    & \textbf{0.79} & 0.74 & 0.71 & 0.69 & 0.64 \\
Diagonal         &  N/A  & 0.82 & \textbf{0.87} & 0.80 & 0.83 \\
Takeball                  & \textbf{1.00} & 0.95 & 0.90 & 0.88 & 0.85 \\ 
PathFollowing  & 0.18 & 0.22 & 0.27 & 0.34 & \textbf{0.35} \\ 
Extra  & 0.38 & 0.34 & \textbf{0.43} & 0.36 & 0.37 \\
\hline
\end{tabular}
    \label{tab:caae_k_value}
\end{table*}

\begin{table*}[!t]
    \centering
    \caption{Running time of PG-Kmeans and CAAE. The unit of time is minute in this table. The GPU used is NVIDIA RTX A6000, 48GB.}
    \begin{tabular}{|l|c|c|c|c|c|c|c|c|}
        \hline
        Algorithm & HalfCheetah & Ant & Walker2d & Hopper & Diagonal & Takeball & PathFollowing & Extra \\ \hline
        PG-Kmean&17&24&32&70&22&5.7&24&9.7\\
        CAAE&6.6&10&9.7&15&2.4&1.9&38&2.3\\
        \hline
    \end{tabular}
    \label{tab:running_time}
\end{table*}

\subsection{Ablation Studies}
\label{app: ablation study}
\subsubsection{Regularization in CAAE}

To mitigate the overfitting issue, we introduced a regularization term to the codebook in CAAE. Without this regularization, for the codebook $\mu$, the last layer of the encoder $C$ and the first layer of the decoder $D$, and for a real number $0<\lambda<1$. We can find the recon loss will not change when $C'=\lambda C,D'=\lambda D$, but the codebook loss will being smaller if $\mu'=\lambda \mu$. This phenomenon will lead the codebook to collapse to a single point, which is not desirable. To prevent this, we add a regularization term to the codebook loss.

We can see from Table \ref{tab:regularization_nmi}, regularized CAAE is more robust and achieves better performance in some hard environments. And regularization will not hurt the performance in almost all the environments.

\begin{table*}[!t]
    \centering
    \caption{NMI of CAAE with or with out codebook regularization.}
    \begin{tabular}{|l|c|c|c|c|c|c|c|c|}
        \hline
        Regularization & HalfCheetah & Ant & Walker2d & Hopper & Diagonal & Takeball & PathFollowing & Extra \\ \hline
        W/o &0.99&0.96&0.65&0.99&0.84&1.00&0.14&0.39\\
        W/ &0.99&0.96&0.88&0.99&0.86&1.00&0.15&0.43\\
        \hline
    \end{tabular}
    \label{tab:regularization_nmi}
\end{table*}

\subsubsection{Impact of Initial Cluster Count \( k \).}  
\label{app: k_value}
We further examined the impact of the initial cluster count \( k \) on the clustering performance to assess the algorithm's robustness when the true number of categories \( k^* \) is unknown or inaccurately estimated. Due to poor performance in the GridWorld environment, this analysis focuses solely on the Gym dataset.

As shown in Figure~\ref{fig: ablation study}, DEC is highly sensitive to \( k \). On the Gym dataset, increasing \( k \) to 10 results in a significant performance drop (0.8 \( \to \) 0.5). This degradation primarily stems from the emergence of multiple active cluster centers, which fragment a single true class—an inherent limitation of K-means-based methods. PG-Kmeans exhibits a similar issue but to a lesser extent. On the Gym dataset, it does not naturally disperse, and in the more challenging GridWorld dataset, its NMI decreases by less than 0.1 even when \( k \) is increased to three times the ground-truth value. Furthermore, after applying the merging process, PG-Kmeans experiences almost no performance degradation. In fact, when \( k \) is slightly larger than \( k^* \), it benefits from reduced overfitting, leading to improved performance.

As illustrated in Table \ref{tab:caae_k_value}, CAAE's clustering performance does not exhibit consistent patterns like PG-Kmeans or DEC, but rather demonstrates distinct and divergent trends across different datasets – showing a clear positive correlation in Pathfollowing environments, a negative correlation in Walker2d, and only minor fluctuations likely caused by variance in Extra. Generally, more challenging environments benefit more from increasing the k-value, while simpler ones show the opposite trend. We believe that this phenomenon may stem from the aforementioned limitations of the K-means method, which not only compromises model performance but also mitigates the overfitting issue discussed earlier.

Another advantage of increasing \( k \) is the enhanced stability of PG-Kmeans, as it reduces the likelihood of mode collapse. Across all tested Gym environments, we observe that the probability of correctly identifying the two policies increases as \( k \) grows.

Empirically, for PG-Kmeans algorithm, the optimal choice of \( k \) should be slightly larger than \( k^* \), as this strikes a balance between improving stability, reducing overfitting, and minimizing classification accuracy loss.



\begin{figure*}
    \centering
    \includegraphics[width=0.24\linewidth]{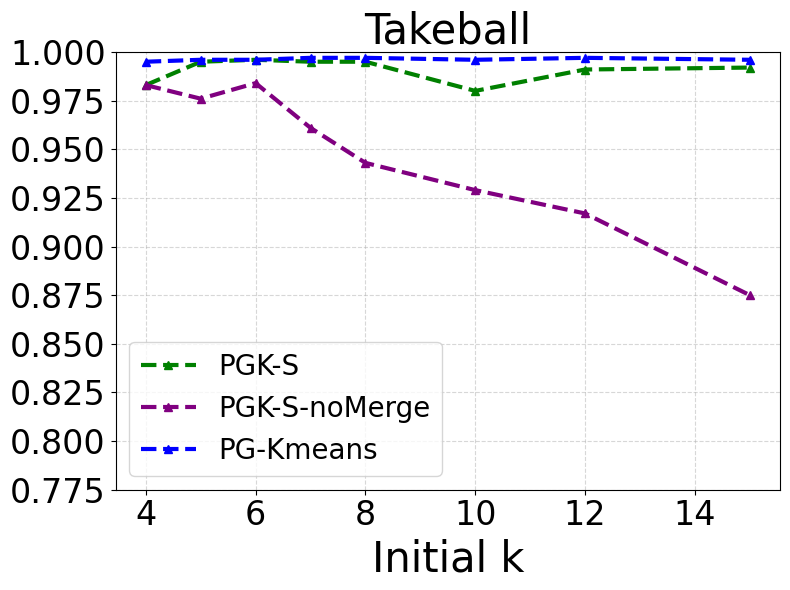}
    \includegraphics[width=0.24\linewidth]{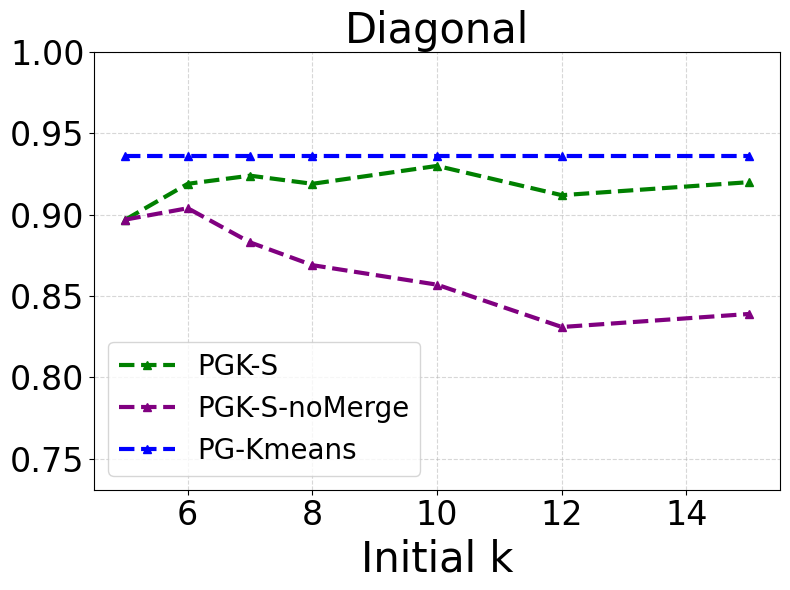}
    \includegraphics[width=0.24\linewidth]{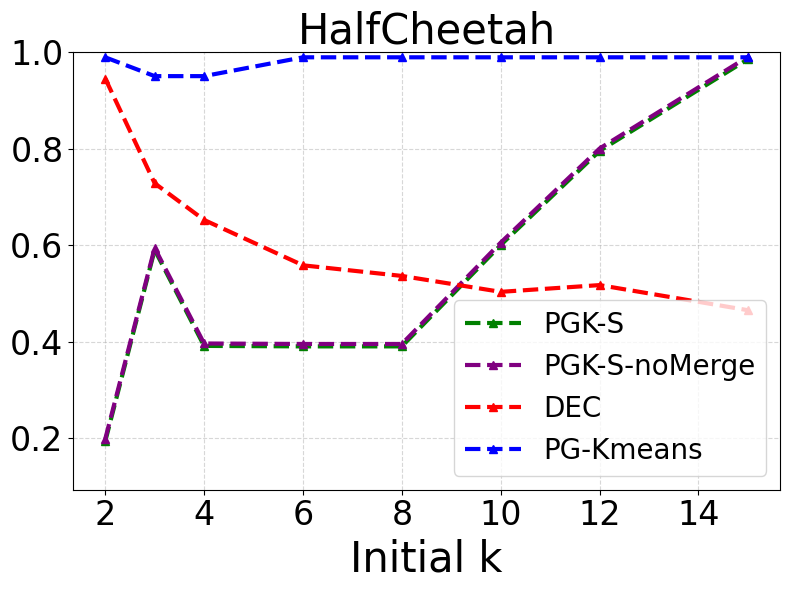}
    \includegraphics[width=0.24\linewidth]{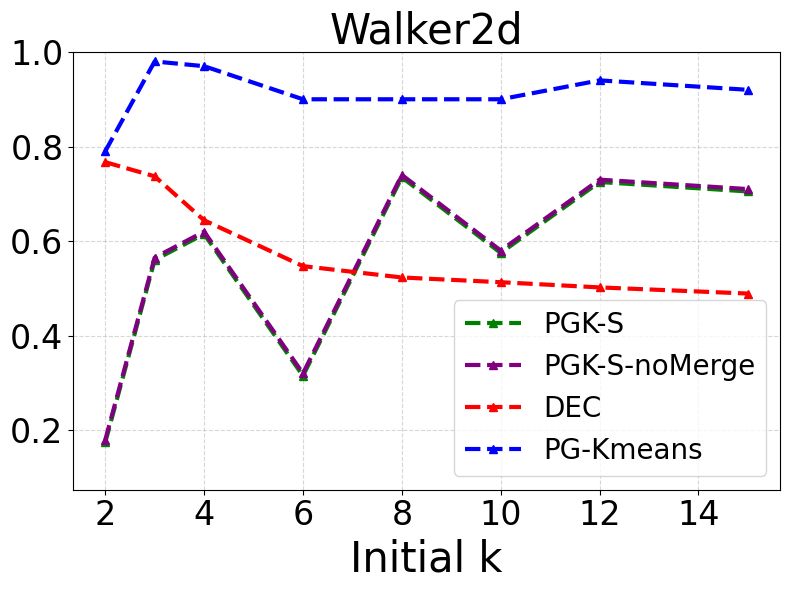}
    \caption{
    The impact of different initial cluster counts \( k \) on the final NMI. All reported values are averaged over 10 random seeds. PGK-S represents the results of single-run PG-Kmeans, while PGK-S-noMerge represents single-run PG-Kmeans without merging. As shown in the figure, increasing the initial cluster count \( k \) leads to a noticeable decline in performance metrics for both PG-Kmeans and DEC, with the effect being more pronounced in DEC. This is because a single class is more likely to be split into multiple subclasses that cannot be automatically merged. However, after introducing the merging process, this issue no longer significantly affects PG-Kmeans' performance. Additionally, a higher initial \( k \) reduces the probability of cluster fusion, thereby stabilizing PG-Kmeans training. Consequently, in the HalfCheetah and Walker2d environments, increasing the initial \( k \) actually improves PG-Kmeans' clustering performance.
    }
    \label{fig: ablation study}
\end{figure*}

\begin{table}[ht]
\centering
\caption{Highest normalized evaluation returns of $k$ output policies, where the expert performance is scaled to 100.0. Every policy is evaluated on 10 different random seeds. Because we stopped the training once all the trajectories have got clear preference over some certain policy, so some networks were not fully trained for evaluation and those results are marked with *. We also highlighted all the results that clearly suffered from mode collapse in red. In these experiments, almost all trajectories were assigned to the same cluster.}
\label{table:pgkmeans_k_value_results}
\begin{tabular}{|l|ccccccccc|}
\toprule
\diagbox[width=6em]{Env}{$k$} & 3 & 4 & 5 & 6 & 7 & 8 & 10 & 12 & 15 \\
\midrule
HalfCheetah & \color{red}{40.47} & \color{red}{42.13} & 45.75* & 32.58* & 69.79 & 68.47 & \color{red}{28.76} & 88.95 & 49.59* \\
Hopper      & 44.42 & 45.56 & 46.47 & 111.24 & 80.84* & 69.85* & 93.44 & 24.07* & 101.33 \\
Ant         & \color{red}{22.27} & 116.91 & 111.94 & 61.97 & 103.52 & 130.80 & 47.78* & 67.38 & 67.80 \\
Walker2d    & 107.44 & 8.69* & 109.84 & \color{red}{62.74} & 103.32 & 1.29* & 98.26 & 6.33* & -0.09* \\
\bottomrule
\end{tabular}
\end{table}

\begin{figure}[H]
    \centering
    \begin{subfigure}{0.495\textwidth}
        \includegraphics[width=0.495\textwidth]{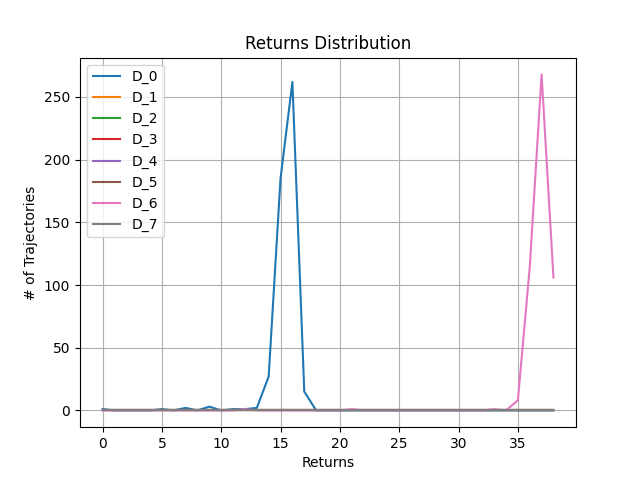}
        \includegraphics[width=0.495\textwidth]{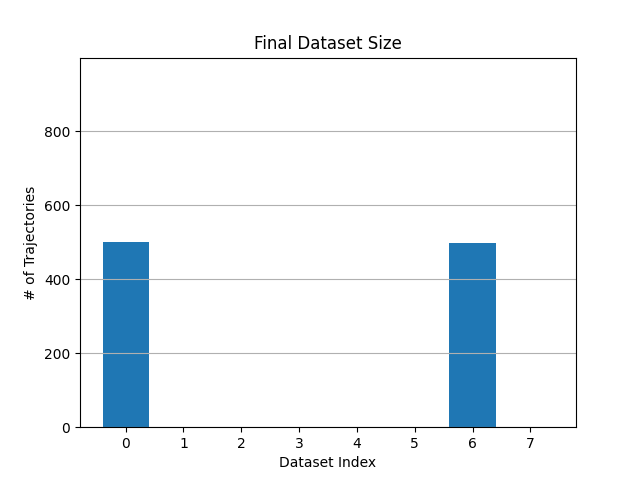}
        \caption{HalfCheetah with $k=8$}
    \end{subfigure}
    \begin{subfigure}{0.495\textwidth}
        \includegraphics[width=0.495\textwidth]{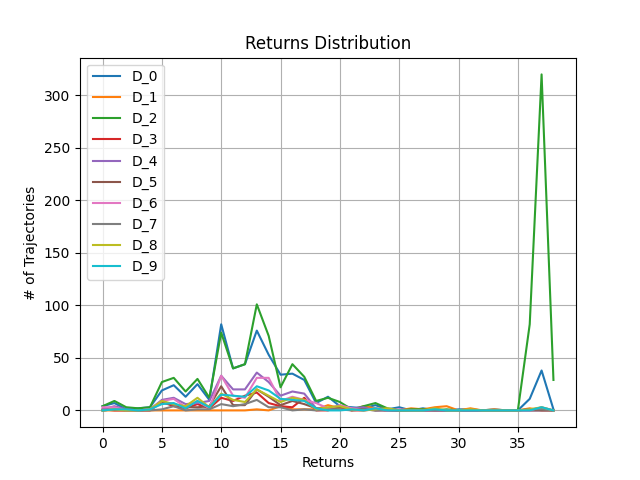}
        \includegraphics[width=0.495\textwidth]{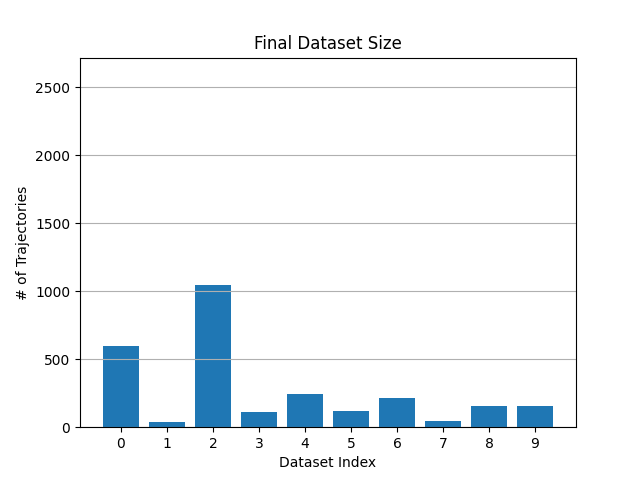}
        \caption{Hopper with $k=10$ }
    \end{subfigure}

    \vspace{1em}
    \begin{subfigure}{0.495\textwidth}
        \includegraphics[width=0.495\textwidth]{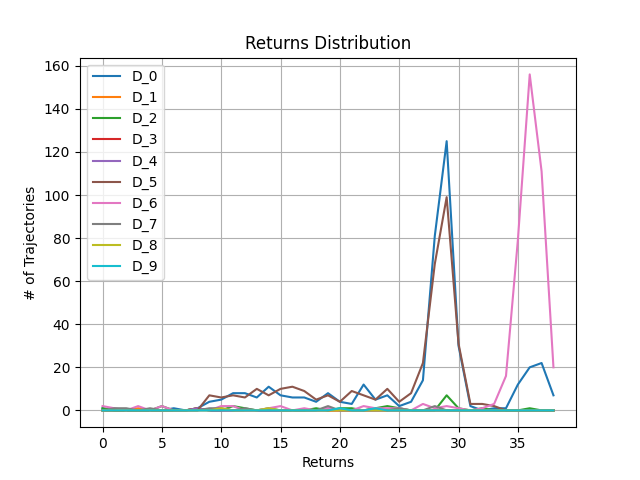}
        \includegraphics[width=0.495\textwidth]{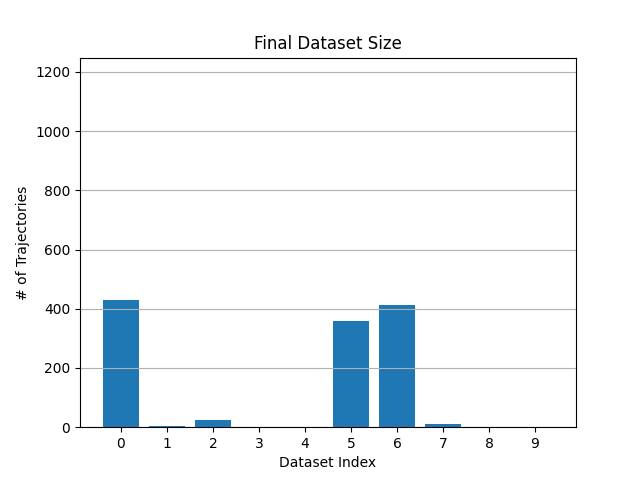}
        \caption{Ant with $k=10$ }
    \end{subfigure}
    \begin{subfigure}{0.495\textwidth}
        \includegraphics[width=0.495\textwidth]{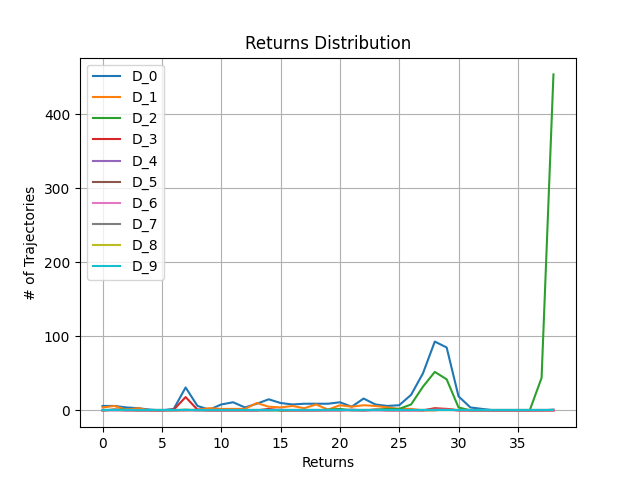}
        \includegraphics[width=0.495\textwidth]{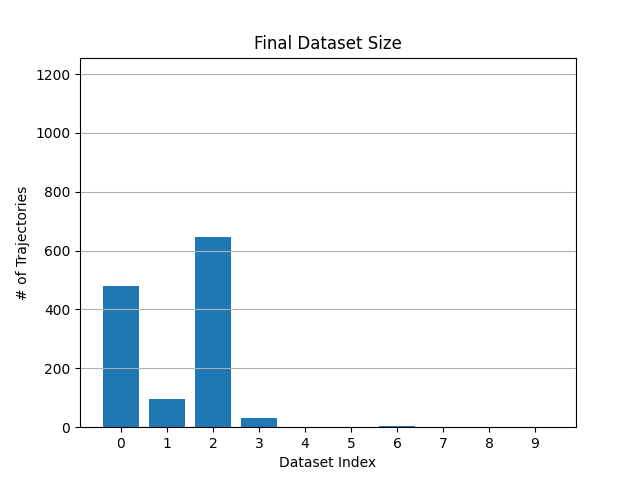}
        \caption{Walker2d with $k=10$ }
    \end{subfigure}

    \caption{Successful classification samples for different environments and $k$ values. From the figures, it is evident that the dataset is divided into 2–3 primary components. Since the dataset does not come with inherent classification labels, it is not possible to directly calculate classification accuracy. However, we can infer the differences between clusters indirectly by analyzing the return distribution of each cluster.}
    \label{fig:histogram}
\end{figure}



\end{document}